\newtheorem{theorem}{Theorem}
\newtheorem{lemma}{Lemma}
\newtheorem{proposition}{Proposition}
\newcommand{\squishlist}{
 \begin{list}{$\bullet$}
  { \setlength{\itemsep}{0pt}
     \setlength{\parsep}{3pt}
     \setlength{\topsep}{3pt}
     \setlength{\partopsep}{0pt}
     \setlength{\leftmargin}{1.5em}
     \setlength{\labelwidth}{1em}
     \setlength{\labelsep}{0.5em} } }
\newcommand{\squishlisttwo}{
 \begin{list}{$\bullet$}
  { \setlength{\itemsep}{0pt}
    \setlength{\parsep}{0pt}
    \setlength{	opsep}{0pt}
    \setlength{\partopsep}{0pt}
    \setlength{\leftmargin}{2em}
    \setlength{\labelwidth}{1.5em}
    \setlength{\labelsep}{0.5em} } }
\newcommand{\squishend}{
  \end{list}  }
\DeclareMathOperator{\n}{n}
\DeclareMathOperator{\a_}{A}
\DeclareMathOperator{\fa_}{fA}
\DeclareMathOperator{\c_}{C}
\DeclareMathOperator{\fc_}{fC}
\DeclareMathOperator{\ac}{AC}
\DeclareMathOperator{\fac}{fAC}
\DeclareMathOperator{\fafc}{fAfC}
\DeclareMathOperator{\afc}{AfC}
\renewcommand{\=}{\!=\!}
\newcommand{\bs}{\boldsymbol}
\newcommand{\bn}{\boldsymbol{n}}
\newcommand{\cmdp}{{$\mathbb{C}$Dec-POMDP}}
\title{Policy Gradient With Value Function Approximation \\ For Collective Multiagent Planning}
\author{
  Duc Thien Nguyen \;\;  Akshat Kumar\;\; Hoong Chuin Lau\\
  School of Information Systems\\
  Singapore Management University\\
  80 Stamford Road, Singapore 178902 \\
  \texttt{\{dtnguyen.2014,akshatkumar,hclau\}@smu.edu.sg} \\
}
\begin{document}
% \nipsfinalcopy is no longer used

\maketitle

\begin{abstract}
Decentralized (PO)MDPs provide an expressive framework for sequential decision making in a multiagent system. Given their computational complexity, recent research has focused on tractable yet practical subclasses of Dec-POMDPs. We address such a subclass called {\cmdp} where the collective behavior of a population of agents affects the joint-reward and environment dynamics. Our main contribution is an actor-critic (AC) reinforcement learning method for optimizing {\cmdp} policies. Vanilla AC has slow convergence for larger problems. To address this, we show how a particular decomposition of the approximate action-value function over agents leads to effective updates, and also derive a new way to train the critic based on local reward signals. Comparisons on a synthetic benchmark and a real world taxi fleet optimization problem show that our new AC approach provides better quality solutions than previous best approaches.
\end{abstract}

\section{Introduction}

Decentralized partially observable MDPs (Dec-POMDPs) have emerged in recent years as a promising framework for multiagent collaborative sequential decision making~\citep{bern02}. Dec-POMDPs model settings where agents act based on different partial observations about the environment and each other to maximize a global objective. Applications of Dec-POMDPs include coordinating planetary rovers~\citep{Becker04JAIR}, multi-robot coordination~\citep{Amato15} and throughput optimization in wireless network~\citep{Winstein:2013,PajarinenHP14}. However, solving Dec-POMDPs is computationally challenging, being NEXP-Hard even for 2-agent problems~\citep{bern02}. 

To increase scalability and application to practical problems, past research has explored restricted interactions among agents such as state transition and observation independence~\citep{nair05,Kumar2011,kumar2015probabilistic}, event driven interactions~\citep{BZLaamas04} and weak coupling among agents~\citep{Witwicki10}. Recently, a number of works have focused on settings where agent identities do not affect interactions among agents. Instead, environment dynamics  are primarily driven by the \textit{collective} influence of agents~\citep{varakantham2014decentralized,sonu2015individual,robbel2016exploiting,NguyenKL17}, similar to well known congestion games~\citep{meyers12}. Several problems in urban transportation such as taxi supply-demand matching can be modeled using such collective planning models~\citep{varakantham2012decision,NguyenKL17}.

In this work, we focus on the \textit{collective} Dec-POMDP framework ({\cmdp}) that formalizes such a collective multiagent sequential decision making problem under uncertainty~\citep{NguyenKL17}.~\citeauthor{NguyenKL17} present a sampling based approach to optimize policies in the {\cmdp} model. 
A key drawback of this previous approach is that policies are represented in a tabular form which scales poorly with the size of observation space of agents. 
Motivated by the recent success of reinforcement learning (RL) approaches~\citep{Mnih15,schulman15,mniha16,FoersterAFW16,Leibo17}, our main contribution is a actor-critic (AC) reinforcement learning method~\citep{Konda:2003} for optimizing {\cmdp} policies. 
\begin{wrapfigure}{r}{6.6cm}
	\centering
	\includegraphics[scale=0.55]{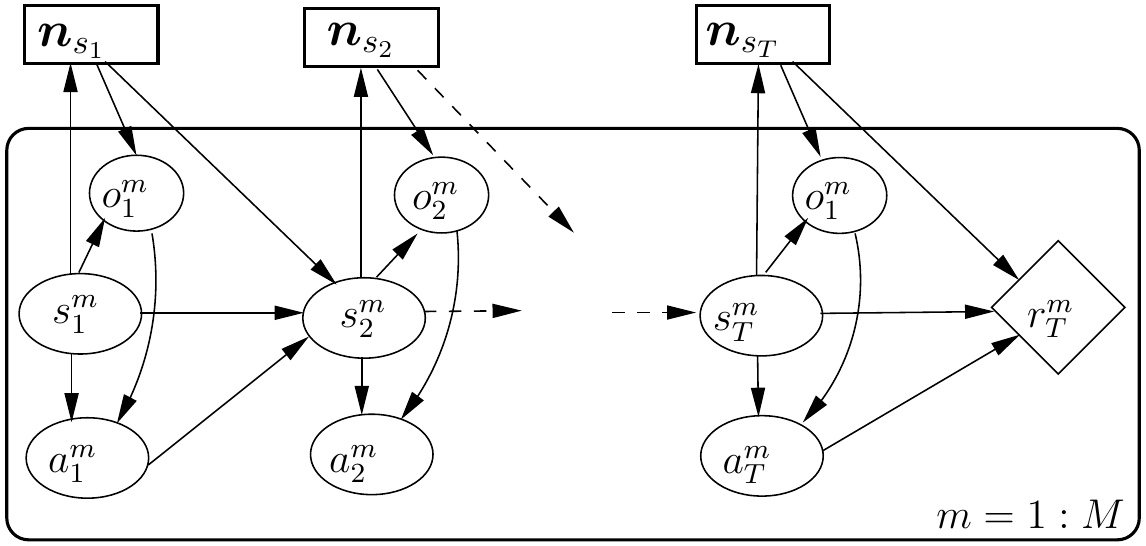}
\caption{\small T-step DBN for a {\cmdp}} 
	\label{fig:cmdp}
%	\vskip -0.2cm
\end{wrapfigure}
Policies are represented using function approximator such as a neural network, thereby  avoiding the scalability issues of a tabular policy. 
We derive the policy gradient and develop a factored action-value approximator based on collective agent interactions in {\cmdp}s. 
Vanilla AC is slow to converge on large problems due to known issues of learning with \textit{global} reward in large multiagent systems~\citep{Bagnell:2005}. To address this, we also develop a new way to train the critic, our action-value approximator, that effectively utilizes local value function of agents. 

We test our approach on a synthetic multirobot grid navigation domain from~\citep{NguyenKL17}, and a real world supply-demand taxi matching problem in a large Asian city with up to 8000 taxis (or agents) showing the scalability of our approach to large multiagent systems. Empirically, our new factored actor-critic approach works better than previous best approaches providing much higher solution quality. The factored AC algorithm empirically converges much faster than the vanilla $\ac$ validating the effectiveness of our new training approach for the critic. 

%In this paper, we focus on the setting of centralized training for decentralized policy execution, similar to 
%This setting, in fact, is a standard paradigm in multi-agent planning research, e.g. from model-based multi-agent MDPs \cite{bern02}, \cite{Becker04JAIR} to model-free multi-agent deep reinforcement learning \cite{FoersterAFW16}.

\noindent{\textbf{Related work: }} Our work is based on the framework of policy gradient with approximate value function similar to \cite{Sutton:1999}. However, as we empirically show, directly applying the original policy gradient from \cite{Sutton:1999} into the multi-agent setting and specifically for the {\cmdp} model results in a high variance solution. In this work, we show a suitable form of compatible value function approximation for {\cmdp s} that results in an efficient and low variance policy gradient update. Reinforcement learning for decentralized policies has been studied earlier in \cite{peshkin2000learning}, \cite{aberdeen2006policy}. \cite{guestrin2002coordinated} also proposed using REINFORCE to train a softmax policy of a factored value function from the coordination graph. However in such previous works, policy gradient is estimated from the global empirical returns instead of a decomposed critic. We show in section \ref{sec:criticTraining} that having a decomposed critic along with an individual value function based training of this critic is important for sample-efficient learning. Our empirical results show that our proposed critic training has faster convergence than training with global empirical returns.

\section{Collective Decentralized POMDP Model}

We first describe the {\cmdp} model introduced in~\citep{NguyenKL17}. A $T$-step Dynamic Bayesian Network (DBN) for this model is shown using the plate notation in figure~\ref{fig:cmdp}. It consists of the following:
\squishlist
	\item A finite planning horizon $H$.
	\item The number of agents $M$. An agent $m$ can be in one of the states in the state space $S$. The joint state space is $\times_{m=1}^M S$. We denote a single state as $i\in S$.
	\item A set of action $A$ for each agent $m$. We denote an individual action as $j\in A$.
	\item Let $({s}_{1:H}, {a}_{1:H})^m\=(s_1^m,a_1^m,s_2^m \ldots, s_H^m, a_H^m)$ denote the complete state-action trajectory of an agent $m$. We denote the state and action of agent $m$ at time $t$ using random variables $s_t^m$, $a_t^m$. Different indicator functions $\mathbb{I}_{t}(\cdot)$ are defined in table~\ref{tab:not}. We define the following count given the trajectory of each agent $m\in M$:
	$$n_{t}(i, j, i') \hspace{0pt} \= \sum_{m =1}^M \mathbb{I}^m_t( i, j, i^\prime)\; \forall i, i' \!\!\in\!\! S, j \!\!\in\!\! A$$
As noted in table~\ref{tab:not}, count $n_{t}(i, j, i')$ denotes the number of agents in state $i$ taking action $j$ at time step $t$ and transitioning to next state $i'$; other counts, $n_t(i)$ and $n_t(i, j)$, are defined analogously. Using these counts, we can define the count tables $\bs{\n}_{\bs{s}_t}$ and $\bs{\n}_{\bs{s}_t\bs{a}_t}$ for the time step $t$ as shown in table~\ref{tab:not}.
	\item We assume a general partially observable setting wherein agents can have different observations based on the collective influence of other agents. An agent observes its local state $s_t^m$. In addition, it also observes $o_t^m$ at time $t$ based on its local state $s_t^m$ and the count table $\bs{\n}_{\bs{s}_t}$. E.g., an agent $m$ in state $i$ at time $t$ can observe the count of other agents also in state $i$ (=$n_{t}(i)$) or other agents in some neighborhood of the state $i$ (=$\{n_t(j)\; \forall j\in \text{Nb}(i)\}$).
		%that an agent $m$ has local full observability. The agent deterministically observes its local state, say $s_t^m\= i$, at time $t$. In addition, it also observes the aggregate count $n_{t}(i)$ of other agents present at the state $i$. 
	\item The transition function is $ \phi_t\big(s^m_{t+1}\= i'|s^m_{t}\=i, a^m_t\=j,  \bs{\n}_{\bs{s}_t} \big)$.  The transition function is the same for all the agents. Notice that it is affected by $\bs{\n}_{\bs{s}_t}$, which depends on the collective behavior of the agent population.
	%The transition function is defined as $ \phi_{t, t+1}\!:\! S\!\times\! A \!\times\! S \!\times\! [0, M]\!\rightarrow\! \Re^+ $ denotes the probability that an agent moves from state $i$ at time $t$ to $i^\prime$ when taking action $j$ and there are $ n_{t}(i) $ agents at state $i$: $ \phi_t\big(s^m_{t+1}\= i'|s^m_{t}\=i, a^m_t\=j, n_{t}(i) \big)$. The transition function is the same for all the agents.
	\item Each agent $m$ has a non-stationary policy  $\pi^m_t(j| i,  o_t^m(i, \bs{\n}_{\bs{s}_t}))$ denoting the probability of agent $m$ to take action $j$ given its observation $(i,  o_t^m(i, \bs{\n}_{\bs{s}_t}))$ at time $t$. We denote the policy over the planning horizon of an agent $m$ to be $\pi^m = (\pi^m_1, \ldots, \pi^m_H)$.
	\item An agent $m$ receives the reward $r^m_t = r_t(i, j,  \bs{\n}_{\bs{s}_t})$ dependent on its local state and action, and  the counts $\bs{\n}_{\bs{s}_t}$.% collective observation $o_t^m(i, \bs{\n}_{\bs{s}_t})$. The reward function is same for all the agents.
	\item Initial state distribution, $b_o = (P(i)\forall i \in S)$, is  the same for all agents.
\squishend
We  present here the simplest version where all the agents are of the same type having similar state transition, observation and reward models. The model can handle multiple agent types where agents have different dynamics based on their type. We can also incorporate an \textit{external} state that is unaffected by agents' actions (such as taxi demand in transportation domain). Our results are extendible to address such settings also. 

%multiple agent types where agents have different dynamics based on their type. %Similarly, we can also have a external state $s_u$ which is unaffected by any agents' action and has its own transition function $P(s_u^\prime|s_u)$. However, this external state can affect 
%We only present here the simplest version where $\phi_t$ and $R_t$ are dependent on $n_{t}(i)$. They can be extended to depend on $n_{t}(i, j)$. Similarly, we have assumed that all the agents are of the same type. Our model and algorithms can be extended to handle multiple agent types.
%The {\cmdp} model is not transition, reward or observation independent.

\begin{table}[t]
\centering
\renewcommand{\arraystretch}{1.1}
%\small
\resizebox{5.5in}{!}{
\begin{tabular}{ l l }
  \hhline{==}
  \noalign{\smallskip}
  $\mathbb{I}^m_{t}(i) \!\in\! \{0, 1\}$ & if agent $m$ is at state $i$ at time $t$ or $s_t^m = i$ \\
  $\mathbb{I}^m_{t}(i, j)  \!\in\! \{0, 1\}$ & if agent $m$ takes action $j$ in state $i$  at time $t$ or $(s_t^m, a_t^m) = (i, j)$ \\
  $\mathbb{I}^m_{t}(i, j, i') \!\in\! \{0, 1\}$ & if agent $m$ takes action $j$ in state $i$ at  time $t$ and transitions to state $i^\prime$  or $(s_t^m, a_t^m, s_{t+1}^m) = (i, j, i^\prime)$  \\
  $n_t(i)\!\in\! [0;M]$& Number of agents at state $i$ at time $t$\\
  $n_t(i, j)\!\in\! [0;M]$ & Number of agents at state $i$ taking  action $j$ at time $t$\\
  $n_t(i, j, i^\prime)\!\in\! [0;M]$ & Number of agents at state $i$  taking action $j$ at time $t$ and transitioning to  state $i'$ at time $t+1$ \\
  $\bs{\n}_{\bs{s}_t}$ & Count table $(n_{t}(i)\; \forall i\!\in\! S)$ \\
  $\bs{\n}_{\bs{s}_t\bs{a}_t}$ &  Count table $(n_{t}(i, j)\; \forall i \!\in\! S, j\!\in\! A)$\\
  $\bs{\n}_{\bs{s}_t\bs{a}_t\bs{s}_{t+1}}$ &  Count table $(n_{t}(i, j, i')\; \forall i, i' \!\in\! S, j\!\in\! A)$\\
  %$\bs{n}_{t}$ & Count table $\big(n_{t}(i), n_{t}(i, j),n_{t}(i, j, i') \; \forall i, i' \!\in\! S, j\!\in\! A\big)$ 
\end{tabular}}
\caption{\small Summary of notations given the state-action trajectories, $({s}_{1:H}, {a}_{1:H})^m\; \forall m$, for all the agents}
\label{tab:not}
\end{table}

Models such as {\cmdp}s are useful in settings where agent population is large, and agent identity does not affect the reward or the transition function. A motivating application of this model is for the taxi-fleet optimization where the problem is to compute policies for taxis such that the total profit of the fleet is maximized~\citep{varakantham2012decision,NguyenKL17}. The decision making for a taxi is as follows. At time $t$, each taxi observes its current city zone $z$ (different zones constitute the state-space $S$), and also the count of other taxis in the current zone and its neighboring zones as well as an estimate of the current local demand. This constitutes the count-based observation $o(\cdot)$ for the taxi. Based on this observation, the taxi must decide whether to stay in the current zone $z$ to look for passengers or \textit{move} to another zone. These decision choices depend on several factors such as the ratio of demand and the count of other taxis in the current zone. Similarly, the environment is stochastic with variable taxi demand at different times. Such historical demand data is often available using GPS traces of the taxi fleet~\citep{varakantham2012decision}. 

%If demand in the current zone is lower than the number of taxis currently present, then the taxi should move to another zone where there are fewer taxis than the demand.

\noindent{\textbf{Count-Based statistic for planning: }}A key property in the {\cmdp} model is that the model dynamics depend on the collective interaction among agents rather than agent identities. In settings such as taxi fleet optimization, the agent population size can be quite large ($\approx 8000$ for our real world experiments). Given such a large  population, it is not possible to compute unique policy for each agent. Therefore, similar to previous work~\citep{varakantham2012decision,NguyenKL17}, our goal is to compute a homogenous policy $\pi$ for all the agents. As the policy $\pi$ is dependent on counts, it represents an expressive class of policies. 

For a fixed population $M$, let $\{({s}_{1:T}, {a}_{1:T})^m\; \forall m\}$ denote the state-action trajectories of different agents sampled from the DBN in figure~\ref{fig:cmdp}. Let $\bs{\n}_{1:T} \!\=\! \{(\bs{\n}_{\bs{s}_t}$, $\bs{\n}_{\bs{s}_t\bs{a}_t}, \bs{\n}_{\bs{s}_t\bs{a}_t\bs{s}_{t+1}})\; \forall t\=1\!:\!T\}$ be the combined vector of the resulting count tables for each time step $t$.~\citeauthor{NguyenKL17} show that counts $\bs{\n}$ are the \textit{sufficient statistic} for planning. That is, the joint-value function of a policy $\pi$ over horizon $H$ can be computed by the expectation over counts as~\citep{NguyenKL17}:
\begin{align}
\label{eq:vf}V(\pi) = \sum_{m=1}^M\sum_{T=1}^H \mathbb{E}[r^m_T] = \sum_{\bs{\n}\in \Omega_{1:H}} P(\bs{\n};\pi) \bigg[ \sum_{T=1}^H \sum_{i\in S, j\in A}n_{T}(i,j) r_T\big(i, j, \bs{\n}_{T}\big) \bigg]
\end{align}
\noindent Set $\Omega_{1:H}$ is the set of all allowed consistent count tables as:
{
\begin{align*}
\label{eq:norm}&\sum_{i\in S}n_{T}(i) \= M\; \forall T\;;\; \sum_{j\in A} n_{T}(i, j) \= n_{T}(i)\; \forall j \forall T\;; \sum_{i'\in S} n_{T}(i, j, i') \= n_{T}(i, j) \; \forall i\in S, j\in A, \forall T
\end{align*}}
$P(\bs{\n};\pi)$ is the distribution over counts (detailed expression in appendix). A key benefit of this result is that we can evaluate the policy $\pi$ by sampling counts $\bs{\n}$ directly from  $P(\bs{\n})$ without sampling individual agent trajectories $({s}_{1:H}, {a}_{1:H})^m$ for different agents, resulting in significant computational savings. Our goal is to compute the optimal policy $\pi$ that maximizes $V(\pi)$. We assume a RL setting with centralized learning and decentralized execution. We assume a simulator is available that can provide count samples from $P(\bs{\n};\pi)$.

\section{Policy Gradient for {\cmdp}s}

Previous work proposed an expectation-maximization (EM)~\citep{Dempster77} based sampling approach to optimize the policy $\pi$~\citep{NguyenKL17}. The policy is represented as a piecewise linear tabular policy over the space of counts $\bs{\n}$ where each linear piece specifies a distribution over next actions. However, this tabular representation is limited in its expressive power as the number of pieces is fixed apriori, and the range of each piece has to be defined manually which can adversely affect performance. Furthermore, exponentially many pieces are required when the observation $o$ is multidimensional (i.e., an agent observes counts from some local neighborhood of its location). To address such issues, our goal is to optimize policies in a functional form such as a neural network.

%Furthermore, it is not computationally efficient. In the previous work, an agent $m$ in state $i$ only observes the local count of other agents or $o(i, \bs{n}_{t})\= n_t(i)$. Ideally, we would want an agent to observe count information from some neighborhood of the state $i$ or $o(i, \bs{n}_{t})\= \{n_t(i), n_t(j)j\!\in\! \text{Nb}(i)\}$, which would require an exponential number of linear policy pieces in the tabular representation. To address this, our goal is to optimize policies which are specified in a functional form such as a neural network.

We first extend the policy gradient theorem of~\citep{Sutton:1999} to {\cmdp}s. Let $\theta$ denote the vector of policy parameters. We next show how to compute $\nabla_{\theta}V(\pi)$. 
Let $\bs{s}_t$, $\bs{a}_t$ denote the joint-state and joint-actions of all the agents at time $t$. The value function of a given policy $\pi$ in an expanded form is given as:
\begin{align}
V_t(\pi) = \sum_{\bs{s}_{t}, \bs{a}_{t}} P^{\pi}(\bs{s}_{t}, \bs{a}_{t}|b_o,\pi)Q^{\pi}_t(\bs{s}_{t}, \bs{a}_{t})
\end{align}
where $ P^{\pi}(\bs{s}_{t}, \bs{a}_{t}|b_o) = \sum_{\bs{s}_{1:t-1},\bs{a}_{1:t-1}} P^{\pi}(\bs{s}_{1:t}, \bs{a}_{1:t}|b_o) $ is the distribution of the joint state-action $ \bs{s}_{t}, \bs{a}_{t} $ under the policy $\pi$. The value function  $ Q^{\pi}_t(\bs{s}_{t}, \bs{a}_{t}) $ is computed as:
\begin{align}
\label{eq:qfunc}
Q^{\pi}_t(\bs{s}_{t}, \bs{a}_{t}) = r_t(\bs{s}_{t}, \bs{a}_{t}) + \sum_{\bs{s}_{t+1}, \bs{a}_{t+1}}P^{\pi}(\bs{s}_{t+1}, \bs{a}_{t+1}|\bs{s}_{t}, \bs{a}_{t})Q^{\pi}_{t+1}(\bs{s}_{t+1}, \bs{a}_{t+1})
\end{align}
%Before showing the policy gradient, we state a useful lemma:
%\begin{lemma}
%\label{lem:1}
%The gradient $\nabla_\theta \log P^\pi(\bs{a}_t|\bs{s}_t )$ is given as
%$$\nabla_\theta \log P^\pi(\bs{a}_t|\bs{s}_t ) =  \sum_{i\in S, j\in A}{n_{t}}(i,j)  \nabla_\theta  \log \pi_t\big(j|i, o(i, \bs{\n}_{\bs{s}_t})\big)$$
%where $n_{t}(i,j)$, as defined in table~\ref{tab:not}, is the count of agents in state $i$ taking action $j$ as per the joint state, action $(\bs{s}_t, \bs{a}_t)$, and $ \bs{\n}_{\bs{s}_t}$ is the count table.
%\end{lemma}
We next state the policy gradient theorem for {\cmdp}s:
\begin{theorem}
For any {\cmdp}, the policy gradient is given as:
\begin{equation}
\label{eq:normalPolicyUpdate}
\nabla_\theta V_1(\pi)= \sum_{t=1}^H E_{\bs{s}_t,\bs{a}_t|b_o, \pi} \bigg[ Q^{\pi}_t(\bs{s}_t,\bs{a}_t) \sum_{i\in S, j\in A}{n_{t}}(i,j)  \nabla_\theta  \log \pi_t\big(j|i, o(i, \bs{\n}_{\bs{s}_t})\big)\bigg]
\end{equation} 
\end{theorem}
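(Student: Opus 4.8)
The plan is to follow the classical policy gradient derivation of \cite{Sutton:1999}, specialized to the two structural features of the {\cmdp} model: the joint policy factorizes into a product of \emph{identical} per-agent policies, and every per-agent quantity enters only through the count table $\bs{\n}_{\bs{s}_t}$, which is itself a deterministic function of $\bs{s}_t$, so that differentiating in $\theta$ is unambiguous.

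First I would write the joint policy explicitly as $\pi(\bs{a}_t|\bs{s}_t)=\prod_{m=1}^M \pi_t\big(a_t^m|s_t^m, o(s_t^m,\bs{\n}_{\bs{s}_t})\big)$ and apply the log-derivative identity to this product, obtaining
\begin{align*}
\nabla_\theta \pi(\bs{a}_t|\bs{s}_t) = \pi(\bs{a}_t|\bs{s}_t)\sum_{m=1}^M \nabla_\theta\log\pi_t\big(a_t^m|s_t^m,o(s_t^m,\bs{\n}_{\bs{s}_t})\big).
\end{align*}
The key combinatorial step is then to group the $M$ agents by their (state, action) pair: since the policy is homogeneous, every agent with $(s_t^m,a_t^m)=(i,j)$ contributes the identical summand $\nabla_\theta\log\pi_t(j|i,o(i,\bs{\n}_{\bs{s}_t}))$, and there are exactly $n_t(i,j)$ such agents, so
\begin{align*}
\sum_{m=1}^M \nabla_\theta\log\pi_t\big(a_t^m|s_t^m,o(s_t^m,\bs{\n}_{\bs{s}_t})\big) = \sum_{i\in S,\,j\in A} n_t(i,j)\,\nabla_\theta\log\pi_t\big(j|i,o(i,\bs{\n}_{\bs{s}_t})\big).
\end{align*}

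Next I would set up the usual one-step recursion. Define $u_t(\bs{s}_t)=\nabla_\theta\sum_{\bs{a}_t}\pi(\bs{a}_t|\bs{s}_t)Q^\pi_t(\bs{s}_t,\bs{a}_t)$, so that $\nabla_\theta V_1(\pi)=\sum_{\bs{s}_1}P(\bs{s}_1|b_o)u_1(\bs{s}_1)$. Differentiating \eqref{eq:qfunc} and using that the reward $r_t(\bs{s}_t,\bs{a}_t)$ and the transition kernel $P^\pi(\bs{s}_{t+1}|\bs{s}_t,\bs{a}_t)=\prod_m\phi_t(s_{t+1}^m|s_t^m,a_t^m,\bs{\n}_{\bs{s}_t})$ carry no dependence on $\theta$ gives $\nabla_\theta Q^\pi_t(\bs{s}_t,\bs{a}_t)=\sum_{\bs{s}_{t+1}}P^\pi(\bs{s}_{t+1}|\bs{s}_t,\bs{a}_t)\,u_{t+1}(\bs{s}_{t+1})$; combining this with the product rule and the count-aggregation identity above yields
\begin{align*}
u_t(\bs{s}_t)=\sum_{\bs{a}_t}\pi(\bs{a}_t|\bs{s}_t)\bigg[Q^\pi_t(\bs{s}_t,\bs{a}_t)\sum_{i\in S,\,j\in A}n_t(i,j)\nabla_\theta\log\pi_t\big(j|i,o(i,\bs{\n}_{\bs{s}_t})\big)+\sum_{\bs{s}_{t+1}}P^\pi(\bs{s}_{t+1}|\bs{s}_t,\bs{a}_t)u_{t+1}(\bs{s}_{t+1})\bigg].
\end{align*}
Since the horizon is finite we have $u_{H+1}\equiv 0$, so unrolling the recursion from $t=1$ and absorbing the intermediate sums over $\bs{s}_{1:t-1},\bs{a}_{1:t-1}$ into the marginal $P^\pi(\bs{s}_t,\bs{a}_t|b_o,\pi)$ produces exactly \eqref{eq:normalPolicyUpdate}.

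I expect the only genuine obstacle to be making the count-aggregation step fully rigorous --- i.e., justifying that the $M$ per-agent score functions may be re-indexed by $(i,j)$ with multiplicity $n_t(i,j)$ --- which is precisely where the homogeneity of $\pi$ and the count-based observation model are indispensable; without them the gradient would not collapse to a count-weighted form. The remaining bookkeeping (the finite-horizon unrolling and the marginalization of $P^\pi$) is routine and mirrors \cite{Sutton:1999}.
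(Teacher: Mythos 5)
Your proposal is correct and follows essentially the same route as the paper's own proof: unroll the finite-horizon recursion on $Q^\pi_t$ (the transition and reward carrying no $\theta$-dependence), apply the log-derivative trick, factorize the joint policy into per-agent terms, and then use homogeneity to group the $M$ score functions by $(i,j)$ with multiplicity $n_t(i,j)$. The only cosmetic difference is your explicit auxiliary function $u_t$ with $u_{H+1}\equiv 0$, which makes the unrolling slightly tidier than the paper's informal "continue unrolling" step.
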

The proofs of this theorem and other subsequent results are provided in the appendix.

Notice that computing the policy gradient using the above result is not practical for multiple reasons. The space of join-state action $(\bs{s}_t,\bs{a}_t)$ is combinatorial. Given that the agent population size can be large, sampling each agent's trajectory is not computationally tractable. To remedy this, we later show how to compute the gradient by directly sampling counts $\bs{\n}\!\sim\! P(\bs{\n}; \pi)$ similar to policy evaluation in~\eqref{eq:vf}. Similarly, one can estimate the action-value function $Q^{\pi}_t(\bs{s}_t,\bs{a}_t)$ using empirical returns as an approximation. This would be the analogue of the standard REINFORCE algorithm~\citep{Williams1992} for {\cmdp}s. It is well known that REINFORCE may learn slowly than other methods that use a learned action-value function~\citep{Sutton:1999}. Therefore, we next present a function approximator for $Q^{\pi}_t$, and show the computation of policy gradient by directly sampling counts $\bs{\n}$.

\subsection{Policy Gradient with Action-Value Approximation}

One can approximate the action-value function $Q^{\pi}_t(\bs{s}_t,\bs{a}_t)$ in several different ways. We consider the following special form of the approximate value function $f_w$:
\begin{align}
\label{eq:facvf}
Q^{\pi}_t(\bs{s}_t,\bs{a}_t)\approx f_w(\bs{s}_t,\bs{a}_t) = \sum_{m=1}^M f^{m}_{w}\big(s^m_t, o(s^m_t, \bs{\n}_{\bs{s}_t}),a^m_t \big)
\end{align}
where each $f^{m}_{w}$ is defined for each agent $m$ and takes as input the agent's local state, action and the observation. Notice that different components $f_w^m$ are correlated as they depend on the common count table $\bs{\n}_{\bs{s}_t}$. Such a decomposable form is useful as it leads to efficient policy gradient computation. Furthermore,  an important class of approximate value function having this form for {\cmdp}s is the \textit{compatible value function}~\citep{Sutton:1999} which results in an unbiased policy gradient (details in appendix).

%Such a decomposable form is useful as we will show later that it leads to efficient policy gradient computation. Furthermore, an important class of approximate value function having this form for {\cmdp}s is the \textit{compatible value function}~\citep{Sutton:1999}. Using such a compatible action-value approximation, the policy gradient is unbiased (details in the appendix).

\begin{proposition}
	Compatible value function for {\cmdp}s can be factorized as:
	\begin{equation}
	f_w(\bs{s}_t,\bs{a}_t) = \sum_{m} f^{m}_{w}(s^m_t, o(s^m_t, \bn_{\bs{s}_t}),a^m)\nonumber
	\end{equation}	
\end{proposition}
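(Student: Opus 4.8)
The plan is to start from the defining property of a \emph{compatible} value function in the sense of \cite{Sutton:1999}: $f_w$ must satisfy the compatibility condition $\nabla_w f_w(\bs{s}_t,\bs{a}_t) = \nabla_\theta \log P^\pi(\bs{a}_t\mid\bs{s}_t)$, where $P^\pi(\bs{a}_t\mid\bs{s}_t)$ is the probability of the joint action $\bs{a}_t$ given the joint state $\bs{s}_t$ under the policy $\pi$. Since the right-hand side is independent of $w$, this pins down the canonical compatible approximator to the linear form $f_w(\bs{s}_t,\bs{a}_t) = w^{\!\top}\nabla_\theta \log P^\pi(\bs{a}_t\mid\bs{s}_t)$ (any additive $w$-independent baseline one might retain drops out of the gradient, as noted below). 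So the proposition reduces to showing that $\nabla_\theta \log P^\pi(\bs{a}_t\mid\bs{s}_t)$ decomposes as a sum of per-agent terms, each depending only on that agent's own state, action and observation.

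The key structural fact I would use is that, conditioned on the joint state $\bs{s}_t$, the count table $\bs{\n}_{\bs{s}_t}$ is a deterministic function of $\bs{s}_t$, and hence so is every agent's observation $o(s^m_t,\bs{\n}_{\bs{s}_t})$. Consequently the agents draw their actions \emph{conditionally independently} given $\bs{s}_t$, i.e.\ $P^\pi(\bs{a}_t\mid\bs{s}_t) = \prod_{m=1}^M \pi_t\big(a^m_t\mid s^m_t, o(s^m_t,\bs{\n}_{\bs{s}_t})\big)$. Taking $\log$ and then $\nabla_\theta$ turns the product into $\sum_{m=1}^M \nabla_\theta \log \pi_t\big(a^m_t\mid s^m_t, o(s^m_t,\bs{\n}_{\bs{s}_t})\big)$, which is precisely the count-weighted sum $\sum_{i\in S, j\in A} n_t(i,j)\, \nabla_\theta \log \pi_t\big(j\mid i, o(i,\bs{\n}_{\bs{s}_t})\big)$ that already appears in the policy gradient \eqref{eq:normalPolicyUpdate}. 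Substituting this back into the linear form and moving $w^{\!\top}$ inside the sum gives $f_w(\bs{s}_t,\bs{a}_t) = \sum_{m} w^{\!\top}\nabla_\theta \log \pi_t\big(a^m_t\mid s^m_t, o(s^m_t,\bs{\n}_{\bs{s}_t})\big)$, and defining $f^m_w\big(s^m_t, o(s^m_t,\bs{\n}_{\bs{s}_t}), a^m\big) := w^{\!\top}\nabla_\theta \log \pi_t\big(a^m_t\mid s^m_t, o(s^m_t,\bs{\n}_{\bs{s}_t})\big)$ yields exactly the claimed factorization.

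The step I expect to need the most care is the first one: identifying which conditional distribution plays the role of ``the policy'' in the compatibility condition for a collective multiagent model, and checking that the argument of \cite{Sutton:1999} carries over verbatim when the single-agent pair $(s,a)$ is replaced by the joint pair $(\bs{s}_t,\bs{a}_t)$ and $\pi(a\mid s)$ by $P^\pi(\bs{a}_t\mid\bs{s}_t)$. In particular one must verify that a $w$-independent baseline $b(\bs{s}_t)$ added to $f_w$ contributes nothing to $\nabla_\theta V_1(\pi)$, which follows because, rewriting the count-sum as $\nabla_\theta \log P^\pi(\bs{a}_t\mid\bs{s}_t)$, the log-derivative identity gives $\sum_{\bs{a}_t} P^\pi(\bs{a}_t\mid\bs{s}_t)\,\nabla_\theta \log P^\pi(\bs{a}_t\mid\bs{s}_t) = \nabla_\theta \sum_{\bs{a}_t} P^\pi(\bs{a}_t\mid\bs{s}_t) = 0$. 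Everything afterward --- the conditional independence of actions given $\bs{s}_t$, the $\log$-to-sum step, and linearity of $w^{\!\top}(\cdot)$ --- is routine, and the non-stationarity of $\pi$ is handled simply by carrying the time index $t$ through (or by letting $f_w$ and $w$ carry a $t$ index).
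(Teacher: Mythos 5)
Your proposal is correct and follows essentially the same route as the paper: take the compatible approximator to be the linear form $f_w = w^{\top}\nabla_\theta \log P^\pi(\bs{a}_t\mid\bs{s}_t)$, use the conditional independence of agents' actions given the joint state (since the counts and hence observations are deterministic functions of $\bs{s}_t$) to split the log-probability into a per-agent sum, and push $w^{\top}$ inside to define each $f^m_w$. The extra remarks about baselines and time indexing are fine but not needed for the factorization claim itself.
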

We can directly replace $Q^\pi(\cdot)$ in policy gradient~\eqref{eq:normalPolicyUpdate} by the approximate action-value function $f_w$. Empirically, we found that variance using this estimator was high. We exploit the structure of $f_w$ and show further factorization of the policy gradient next which works much better empirically.
\begin{theorem}
	\label{thm:valueFunctionDecomposition}
	For any value function having the decomposition as:
	\begin{align}
	f_w(\bs{s}_t,\bs{a}_t) = \sum_{m} f^{m}_{w}\big(s^m_t, o(s^m_t, \bs{\n}_{\bs{s}_t}),a^m_t\big),
	\end{align} 
	the \textit{policy gradient} can be computed as
	\begin{align}
	\label{eq:polgrad}
	\nabla_\theta V_1(\pi) = \sum_{t=1}^H \mathbb{E}_{\bs{s}_t, \bs{a}_t}\Big[\sum_{m}\nabla_\theta  \log \pi\big(a^m_t|s_t^m, o(s_t^m, \bs{\n}_{\bs{s}_t})\big) f^{m}_{w}\big(s^m_t, o(s^m_t, \bs{\n}_{\bs{s}_t}),a^m_t\big)\Big]
	\end{align}
\end{theorem}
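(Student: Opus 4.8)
The plan is to start from the exact policy gradient~\eqref{eq:normalPolicyUpdate}, rewrite its count-weighted sum of score functions as a plain sum over agents, substitute the decomposed critic $f_w$ for $Q^\pi_t$, expand the resulting product of two sums into diagonal ($m'=m$) and off-diagonal ($m'\neq m$) parts, and show that the off-diagonal part vanishes in expectation, leaving precisely the per-agent form~\eqref{eq:polgrad}.

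First I would use the definition of the counts to rewrite the inner sum in~\eqref{eq:normalPolicyUpdate}. Since $n_t(i,j) = \sum_{m=1}^M \mathbb{I}^m_t(i,j)$ counts exactly those agents $m$ with $(s^m_t,a^m_t)=(i,j)$,
\begin{align*}
\sum_{i\in S,\, j\in A} n_t(i,j)\, \nabla_\theta \log \pi_t\big(j\,|\,i, o(i,\bs{\n}_{\bs{s}_t})\big) \;=\; \sum_{m=1}^M \nabla_\theta \log \pi_t\big(a^m_t\,|\,s^m_t, o(s^m_t,\bs{\n}_{\bs{s}_t})\big).
\end{align*}
Substituting this into~\eqref{eq:normalPolicyUpdate}, replacing $Q^\pi_t(\bs{s}_t,\bs{a}_t)$ by $f_w(\bs{s}_t,\bs{a}_t)=\sum_{m'} f^{m'}_w\big(s^{m'}_t, o(s^{m'}_t,\bs{\n}_{\bs{s}_t}),a^{m'}_t\big)$, and distributing the product over the double index $m,m'$ gives
\begin{align*}
\nabla_\theta V_1(\pi) \;=\; \sum_{t=1}^H \mathbb{E}_{\bs{s}_t,\bs{a}_t}\Big[\, \sum_{m}\sum_{m'} f^{m'}_w\big(s^{m'}_t, o(s^{m'}_t,\bs{\n}_{\bs{s}_t}),a^{m'}_t\big)\, \nabla_\theta \log \pi_t\big(a^m_t\,|\,s^m_t, o(s^m_t,\bs{\n}_{\bs{s}_t})\big)\,\Big].
\end{align*}
It remains to show that the summands with $m'\neq m$ contribute nothing.

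For those cross terms I would condition on the joint state $\bs{s}_t$. The DBN of figure~\ref{fig:cmdp} implies that, given $\bs{s}_t$, the agents choose their actions \emph{independently}, i.e.\ $P^\pi(\bs{a}_t\,|\,\bs{s}_t)=\prod_m \pi_t\big(a^m_t\,|\,s^m_t, o(s^m_t,\bs{\n}_{\bs{s}_t})\big)$, because each observation $o(s^m_t,\bs{\n}_{\bs{s}_t})$ is a deterministic function of $\bs{s}_t$ alone (the count table $\bs{\n}_{\bs{s}_t}$ is formed from states, not actions). Hence for $m'\neq m$ the random variables $a^m_t$ and $(s^{m'}_t,a^{m'}_t)$ are conditionally independent given $\bs{s}_t$, so the summand factorizes:
\begin{align*}
&\mathbb{E}_{\bs{a}_t\,|\,\bs{s}_t}\!\Big[\, f^{m'}_w\big(s^{m'}_t, o(s^{m'}_t,\bs{\n}_{\bs{s}_t}),a^{m'}_t\big)\, \nabla_\theta \log \pi_t\big(a^m_t\,|\,s^m_t, o(s^m_t,\bs{\n}_{\bs{s}_t})\big) \Big] \\
&\qquad=\; \mathbb{E}_{a^{m'}_t\,|\,\bs{s}_t}\!\Big[ f^{m'}_w\big(s^{m'}_t, o(s^{m'}_t,\bs{\n}_{\bs{s}_t}),a^{m'}_t\big) \Big]\cdot \sum_{j\in A} \nabla_\theta \pi_t\big(j\,|\,s^m_t, o(s^m_t,\bs{\n}_{\bs{s}_t})\big),
\end{align*}
where I used the score-function identity $\mathbb{E}_{a^m_t\,|\,\bs{s}_t}\!\big[\nabla_\theta \log \pi_t(a^m_t|\cdot)\big]=\sum_{j\in A}\nabla_\theta \pi_t(j|\cdot)$. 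Because $\sum_{j\in A}\pi_t\big(j\,|\,s^m_t, o(s^m_t,\bs{\n}_{\bs{s}_t})\big)\equiv 1$, the last factor equals $\nabla_\theta 1 = 0$. Taking the outer expectation over $\bs{s}_t$ and summing over $t$ kills every cross term, and retaining only the diagonal terms $m'=m$ yields exactly~\eqref{eq:polgrad}.

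The step that needs the most care is the conditional-independence claim: one must check from the DBN semantics that the time-$t$ marginal $P^\pi(\bs{s}_t,\bs{a}_t\,|\,b_o,\pi)$ in~\eqref{eq:normalPolicyUpdate} factors as $P^\pi(\bs{s}_t\,|\,b_o,\pi)\prod_m \pi_t\big(a^m_t\,|\,s^m_t,o(s^m_t,\bs{\n}_{\bs{s}_t})\big)$, i.e.\ that conditioned on $\bs{s}_t$ the actions are independent across agents and that agent $m'$'s contribution $f^{m'}_w$ depends on the actions only through $a^{m'}_t$. That is exactly what licenses pulling $f^{m'}_w$ out of the $a^m_t$-expectation and invoking the score-function identity; everything else is the routine algebraic expansion above.
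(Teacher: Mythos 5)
Your proposal is correct and follows essentially the same route as the paper's proof: substitute the decomposed critic into the exact gradient, expand the product of the per-agent score sum with the per-agent critic sum, and eliminate the $m'\neq m$ cross terms by conditioning on $\bs{s}_t$ and using that the conditional expectation of the score function vanishes. The only (cosmetic) differences are that you start from the count-grouped form~\eqref{eq:normalPolicyUpdate} and un-group it, and that you spell out the identity $\mathbb{E}_{a^m_t|\bs{s}_t}[\nabla_\theta \log \pi_t(a^m_t|\cdot)]=\nabla_\theta 1=0$ explicitly where the paper merely notes the factor is constant in $a^m_t$.
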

The above result shows that if the approximate value function is factored, then the resulting policy gradient also becomes factored. The above result also applies to agents with multiple types as we assumed the function $f_w^m$ is different for each agent. In the simpler case when all the agents are of same type, then we have the same function $f_w$ for each agent, and also deduce the following:
%We can deduce the following by grouping together terms in the factored value function in~\eqref{eq:facvf}
\begin{align}
\label{eq:approxvf}
f_w(\bs{s}_t,\bs{a}_t) = \sum_{i, j} n_t(i, j) f_w\big(i, j, o(i, \bs{\n}_{\bs{s}_t})\big)
\end{align}
Using the above result, we simplify the policy gradient as:
\begin{align}
\label{eq:gradapprox}\nabla_\theta V_1(\pi) = \sum_t\mathbb{E}_{\bs{s}_t, \bs{a}_t}\Big[\sum_{i,j} n_t(i,j) \nabla_\theta \log \pi\big(j|i, o(i, \bs{\n}_{\bs{s}_t})\big) f_{w}(i, j, o(i, \bs{\n}_{\bs{s}_t}))\Big]
\end{align}

\subsection{Count-based Policy Gradient Computation}

Notice that in~\eqref{eq:gradapprox}, the expectation is still w.r.t. joint-states and actions $(\bs{s}_t, \bs{a}_t)$ which is not efficient in large population sizes. To address this issue, we exploit the insight that the approximate value function in~\eqref{eq:approxvf} and the inner expression in~\eqref{eq:gradapprox} depends only on the counts generated by the joint-state and action $(\bs{s}_t,\bs{a}_t)$.
%To address this issue, we next show how to replace the expectation over agents' state-action trajectories to the expectation over counts.
%Notice that the term in the expectation in the above expression depends only on counts $\bs{\n}$ corresponding to joint-state and action  $(\bs{s}_t), \bs{a}_t$. We exploit this fact to show the next result that replaces the expectation over joint-states and actions to over counts $\bs{\n}$.
\begin{theorem}
For any value function having the form: $f_w(\bs{s}_t,\bs{a}_t) = \sum_{i, j} n_t(i, j) f_w\big(i, j, o(i, \bs{\n}_{\bs{s}_t})\big)$, the \textit{policy gradient} can be computed as:
	\begin{align}
	\label{eq:FpolicyUpdate}
	\mathbb{E}_{\bs{\n}_{1:H}\in \Omega_{1:H}} \bigg[ \sum_{t=1}^H\sum_{i\in S,j\in A} n_{t}(i,j)\nabla_\theta \log \pi\big(j|i, o(i, \bs{\n}_{t})\big) f_{w}(i, j, o(i, \bs{\n}_{t}))\bigg]
	\end{align}
\end{theorem}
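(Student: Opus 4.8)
The plan is to start from the factored gradient already established in~\eqref{eq:gradapprox} and change variables from joint state-action trajectories to count trajectories, exactly as was done for policy evaluation in~\eqref{eq:vf}. The key observation is that, under the assumed decomposition $f_w(\bs{s}_t,\bs{a}_t)=\sum_{i,j}n_t(i,j)f_w(i,j,o(i,\bs{\n}_{\bs{s}_t}))$, the quantity being averaged in~\eqref{eq:gradapprox} no longer depends on the identities of the agents, but only on the aggregate counts they generate.

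Concretely, I would fix a time step $t$ and consider the summand inside $\mathbb{E}_{\bs{s}_t,\bs{a}_t}[\cdot]$ in~\eqref{eq:gradapprox},
\[
g_t(\bs{s}_t,\bs{a}_t)=\sum_{i\in S,j\in A} n_t(i,j)\,\nabla_\theta \log \pi\big(j\,|\,i,o(i,\bs{\n}_{\bs{s}_t})\big)\,f_w\big(i,j,o(i,\bs{\n}_{\bs{s}_t})\big).
\]
Every factor here is a function of the count table $\bs{\n}_{\bs{s}_t\bs{a}_t}$ alone: the multiplicities $n_t(i,j)$ are precisely its entries, and the observation $o(i,\bs{\n}_{\bs{s}_t})$ depends only on $\bs{\n}_{\bs{s}_t}$, which is the deterministic marginal $n_t(i)=\sum_{j}n_t(i,j)$ of $\bs{\n}_{\bs{s}_t\bs{a}_t}$. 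Hence $g_t(\bs{s}_t,\bs{a}_t)=\tilde g_t(\bs{\n}_{\bs{s}_t\bs{a}_t})$ for some $\tilde g_t$ defined on count tables, and writing $\bs{\n}_t$ for this table recovers the integrand of~\eqref{eq:FpolicyUpdate}. Marginalizing the expectation over the count-aggregation map then gives
\[
\mathbb{E}_{\bs{s}_t,\bs{a}_t\,|\,b_o,\pi}\big[g_t(\bs{s}_t,\bs{a}_t)\big]
=\sum_{\bs{\n}_t} P\big(\bs{\n}_{\bs{s}_t\bs{a}_t}=\bs{\n}_t\,;\,\pi\big)\,\tilde g_t(\bs{\n}_t)
=\mathbb{E}_{\bs{\n}_t}\big[\tilde g_t(\bs{\n}_t)\big],
\]
where the law of $\bs{\n}_t$ is the pushforward of $P^{\pi}(\bs{s}_t,\bs{a}_t\,|\,b_o)$ under aggregation. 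By the sufficient-statistic property of counts in the {\cmdp} DBN~\citep{NguyenKL17} --- the same fact underpinning~\eqref{eq:vf} --- this pushforward equals the time-$t$ marginal of $P(\bs{\n};\pi)$, which is supported on the consistent set $\Omega_{1:H}$.

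Finally I would sum over $t=1,\dots,H$ and use linearity of expectation to merge the per-step count marginals into a single expectation over the whole count trajectory $\bs{\n}_{1:H}\sim P(\bs{\n};\pi)$, which is the statement~\eqref{eq:FpolicyUpdate}. The only step with real content is the identification of the pushforward measure on counts with the marginal of $P(\bs{\n};\pi)$: it requires unfolding the DBN transition structure and checking that aggregating agent-level transitions $\phi_t$ reproduces the count-level dynamics together with the $\Omega_{1:H}$ consistency constraints, so I expect this to be the main obstacle to present carefully. However, this is exactly the result already relied upon for policy evaluation in~\eqref{eq:vf} and proved in~\citep{NguyenKL17}, so in the appendix it can be cited rather than re-derived; the reduction of the summand to a function of counts and the linearity argument are then routine.
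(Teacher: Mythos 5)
Your proposal is correct and follows essentially the same route as the paper's proof: both reduce the integrand of~\eqref{eq:gradapprox} to a function of the count tables alone and then replace the expectation over joint state-action trajectories by an expectation over counts, invoking the sufficiency result of \cite{NguyenKL17} (the paper makes the pushforward explicit by grouping trajectories with the same count signature via the combinatorial factor $h(\bs{\n}_{1:H})$ so that $P(\bs{\n}_{1:H})=h(\bs{\n}_{1:H})f(\bs{\n}_{1:H})$, which is exactly the measure identification you defer to the cited reference). No gaps.
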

The above result shows that the policy gradient can be computed by sampling count table vectors $\bs{\n}_{1:H}$ from the underlying distribution $P(\cdot)$ analogous to computing the value function of the policy  in~\eqref{eq:vf}, which is tractable even for large population sizes.

\section{Training Action-Value Function}
\label{sec:criticTraining}
In our approach, after count samples $\bs{\n}_{1:H}$ are generated to compute the  policy gradient, we also need to adjust the parameters $w$ of our critic $f_w$. Notice that as per~\eqref{eq:approxvf}, the action value function $f_w(\bs{s}_t,\bs{a}_t)$ depends only on the counts generated by the joint-state and action $(\bs{s}_t,\bs{a}_t)$. Training $f_w$ can be done by taking a gradient step to minimize the following loss function:
\begin{align}
\min_w \sum_{\xi=1}^K \sum_{t=1}^H \Big(f_w(\bs{\n}^\xi_{t}) - R^{\xi}_t\Big)^2 \label{eq:gloerror}
\end{align}
where $\bs{\n}^\xi_{1:H}$ is a count sample generated from the distribution $P(\bs{\n}; \pi)$; $f_w(\bs{\n}^\xi_{t})$ is the action value function and $R^{\xi}_t$ is the total empirical return for time step $t$ computed  using~\eqref{eq:vf}:
\begin{align}
&f_w(\bs{\n}^\xi_{t})\= \sum_{i,j} n_t^\xi(i, j) f_w(i, j, o(i, \bs{\n}^\xi_t));\;\; R^{\xi}_t = \sum_{T=t}^H \sum_{i\in S, j\in A} n_T^\xi(i, j) r_T(i, j,  \bs{\n}^\xi_T) \label{eq:glosig}
\end{align}
However, we found that the loss in~\eqref{eq:gloerror} did not work well for training the critic $f_w$ for larger problems. Several count samples were required to reliably train $f_w$ which adversely affects scalability for large problems with many agents. It is already known in multiagent RL that algorithms that solely rely on the \textit{global} reward signal (e.g. $R^{\xi}_t$ in our case) may require several more samples than approaches that take advantage of local reward signals~\citep{Bagnell:2005}. Motivated by this observation, we next develop a local reward signal based strategy to train the critic $f_w$.

\noindent\textbf{Individual Value Function: } Let $\bs{\n}^\xi_{1:H}$ be a count sample. Given the count sample $\bs{\n}^\xi_{1:H}$, let {\small$V_t^\xi(i, j)=\mathbb{E}[\sum_{t'=t}^H r_{t'}^m |s^m_t = i, a^t_m = j, n_{1:H}^\xi  ]$} denote the total expected reward obtained by an agent that is in state $i$ and takes action $j$ at time $t$. This \textit{individual} value function can be computed using dynamic programming as shown in~\citep{NguyenKL17}. Based on this value function, we next show an alternative reparameterization of the global empirical reward $R^{\xi}_t$ in~\eqref{eq:glosig}:
\begin{lemma}
\label{lem:local}
The empirical return $R_t^\xi$ for the time step $t$ given the count sample  $\bs{\n}^\xi_{1:H}$ can be re-parameterized as: $R^{\xi}_t  = \sum_{i\in S, j\in A} n_t^\xi(i, j) V_t^\xi(i, j)$.
\end{lemma}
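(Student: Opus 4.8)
The plan is to prove this as an \emph{exact, deterministic} identity that holds for each fixed count sample $\bs{\n}^\xi_{1:H}$, not merely in expectation. The key observation I would make first is that, once the count sample is frozen, the left-hand side $R^\xi_t$ is already a deterministic number: every factor $n^\xi_T(i,j)$ and every reward $r_T(i,j,\bs{\n}^\xi_T)$ appearing in its definition~\eqref{eq:glosig} is a function of the counts alone. Likewise each $V^\xi_t(i,j)$ is a deterministic number, being a conditional expectation in which the \emph{entire} count trajectory is conditioned upon. So the lemma asserts equality of two functions of $\bs{\n}^\xi_{1:H}$, and I would establish it by backward induction on the time index, from $H$ down to the target step $t$, using the dynamic-programming recursion for $V^\xi$ from~\citep{NguyenKL17}.

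The recursion I would invoke expresses the individual value function through the \emph{empirical} conditional laws induced by the count sample: conditioned on the whole count trajectory, an agent observed in $(i,j)$ at time $t$ moves to $i'$ with probability $n^\xi_t(i,j,i')/n^\xi_t(i,j)$, and an agent in state $i'$ at time $t{+}1$ picks action $j'$ with probability $n^\xi_{t+1}(i',j')/n^\xi_{t+1}(i')$. This gives
\[
V^\xi_t(i,j) = r_t(i,j,\bs{\n}^\xi_t) + \sum_{i'\in S} \frac{n^\xi_t(i,j,i')}{n^\xi_t(i,j)} \sum_{j'\in A} \frac{n^\xi_{t+1}(i',j')}{n^\xi_{t+1}(i')}\, V^\xi_{t+1}(i',j'),
\]
with the convention that any summand over a zero count is dropped.

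For the base case $t=H$ the continuation term is absent, so $V^\xi_H(i,j)=r_H(i,j,\bs{\n}^\xi_H)$ and $\sum_{i,j} n^\xi_H(i,j)\,V^\xi_H(i,j)=\sum_{i,j} n^\xi_H(i,j)\,r_H(i,j,\bs{\n}^\xi_H)=R^\xi_H$ immediately. For the inductive step I would multiply the recursion by $n^\xi_t(i,j)$ and sum over $(i,j)$. The immediate-reward term yields exactly $\sum_{i,j} n^\xi_t(i,j)\,r_t(i,j,\bs{\n}^\xi_t)=R^\xi_t-R^\xi_{t+1}$. In the continuation term the factor $n^\xi_t(i,j)$ cancels the denominator, leaving $\sum_{i'}\big(\sum_{i,j} n^\xi_t(i,j,i')\big)\sum_{j'} \frac{n^\xi_{t+1}(i',j')}{n^\xi_{t+1}(i')} V^\xi_{t+1}(i',j')$; invoking the flow-conservation consistency of any valid count sample, $\sum_{i,j} n^\xi_t(i,j,i') = n^\xi_{t+1}(i')$, this collapses to $\sum_{i',j'} n^\xi_{t+1}(i',j')\,V^\xi_{t+1}(i',j')$, which equals $R^\xi_{t+1}$ by the inductive hypothesis. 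Adding the two pieces gives $R^\xi_t$, closing the induction.

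The step I expect to be the main obstacle — and the one I would justify most carefully — is pinning down the conditional laws in the recursion as the empirical count fractions: this is precisely where the exchangeability (symmetry) of the identical agent population is used to argue that the posterior over an individual agent's next state, given the full count trajectory, is the empirical transition fraction $n^\xi_t(i,j,i')/n^\xi_t(i,j)$. I would lean on the individual-value-function construction of~\citep{NguyenKL17} for this. The only remaining care needed is the reindexing identity $\sum_{i,j} n^\xi_t(i,j,i')=n^\xi_{t+1}(i')$, which holds by construction for any sample drawn from the DBN of figure~\ref{fig:cmdp}, together with the bookkeeping that $(i,j)$ or $i'$ with zero count contribute nothing on either side, so no division by zero ever arises.
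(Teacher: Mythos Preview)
Your proof is correct, but it follows a different route from the paper's. The paper's argument is essentially two lines: it recalls that $V_t^{\xi}(i,j)=\mathbb{E}\big[\sum_{t'=t}^H r_{t'}^m \,\big|\, s_t^m=i,\, a_t^m=j,\, \bs{\n}_{1:H}^\xi\big]$, observes that the total empirical return from $t$ onward can be written as the sum over agents $\sum_m V_t^{\xi}(s_t^m,a_t^m)$ (since, once the full count trajectory is fixed, the aggregate future reward is deterministic and equals its own conditional expectation), and then groups identical agents sharing the same $(i,j)$ to obtain $\sum_{i,j} n_t^\xi(i,j) V_t^\xi(i,j)$. No induction, no recursion, no flow-conservation identity is invoked.

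Your approach is instead a purely algebraic backward induction on the dynamic-programming recursion for $V_t^\xi$, using the empirical transition and policy fractions and the incoming-flow identity $\sum_{i,j} n_t^\xi(i,j,i')=n_{t+1}^\xi(i')$. This is more mechanical but also more self-contained: it never appeals to the conditional-expectation interpretation of $V_t^\xi$ and works directly from the recursion that the paper already states. The paper's route is shorter and more conceptual but leans on the reader accepting that $\sum_m V_t^\xi(s_t^m,a_t^m)$ equals the deterministic aggregate return $R_t^\xi$, which implicitly uses a tower-property step. Both are valid; yours makes explicit the telescoping structure that the paper's probabilistic one-liner hides.
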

\noindent\textbf{Individual Value Function Based Loss: } Given lemma~\ref{lem:local}, we next derive an upper bound on the on the true loss~\eqref{eq:gloerror} which effectively utilizes individual value functions:
\begin{align}
\sum_{\xi}\sum_t \Big(f_w(\bs{n}^\xi) - R^{\xi}_t\Big)^2&=\sum_{\xi}\sum_t\Big(\sum_{i,j} n^\xi_{t}(i,j) f_{w}(i, j, o(i, \bs{\n}^\xi_{t})) - \sum_{i,j} n^{\xi}_{t}(i,j) V_t^{\xi}(i,j)\Big)^2\nonumber\\
&=\sum_{\xi}\sum_t\bigg(\sum_{i, j}n^\xi_t(i, j)\Big(f_{w}(i, j, o(i, \bs{\n}^\xi_{t})) -  V_t^{\xi}(i,j)\Big) \bigg)^2\label{eq:normalValueFuncUpdate}\\
&\le M\sum_{\xi}\sum_{t,i,j}n_t(i, j)\Big(f_{w}(i, j, o(i, \bs{\n}^\xi_{t}))  - V_t^{\xi}(i,j)\Big)^2 \label{eq:upperbound}
\end{align}
where the last relation is derived by Cauchy-Schwarz inequality. 
We train the critic using the modified loss function in~\eqref{eq:upperbound}. Empirically, we observed that for larger problems, this new loss function in \eqref{eq:upperbound} resulted in much faster convergence than the original loss function in~\eqref{eq:normalValueFuncUpdate}.  Intuitively, this is because the new loss~\eqref{eq:upperbound} tries to adjust each critic component $f_{w}(i, j, o(i, \bs{\n}^\xi_{t}))$ closer to its counterpart empirical return $V_t^{\xi}(i,j)$. However, in the original loss function~\eqref{eq:normalValueFuncUpdate}, the focus is on minimizing the global loss, rather than adjusting each individual critic factor $f_{w}(\cdot)$ towards the corresponding empirical return.

\IncMargin{1.5em}
\begin{algorithm}[tp]
\small
\setstretch{1.0}
  \SetAlgoLined\DontPrintSemicolon
  %\SetKwFunction{algo}{\textsc{Collective Actor Critic}}
  \SetKwProg{myalg}{Algorithm}{}{}
  %\myalg{\algo{}}{
  Initialize network parameter $\theta$ for actor $\pi$ and and $w$ for critic $f_w$\;
  $\alpha\leftarrow$ actor learning rate \;
  $\beta\leftarrow$ critic learning rate \;
\Repeat{\textbf{convergence}}
{
	Sample count vectors $\bs{\n}^{\xi}_{1:H} \sim P(\bs{n}_{1:H};\pi)\; \forall \xi= 1 \text{ to } K$\;
	Update critic as:\\
	$\fc_: w = w - \beta \frac{1}{K} \nabla_w\Big[\sum_{\xi}\sum_{t,i,j}n^{\xi}_t(i, j)\Big(f_{w}(i, j, o(i,  \bs{\n}^{\xi}_t))  - V_t^{\xi}(i,j)\Big)^2 \Big]$\;
	$\c_ \;: w = w - \beta \frac{1}{K} \nabla_w\Big[\sum_{\xi} \sum_t \Big(\sum_{i,j}n^{\xi}_t(i, j) f_{w}(i, j, o(i,  \bs{\n}^{\xi}_t))  - \sum_{i,j}n^{\xi}_t(i, j) V_t^{\xi}(i,j)\Big)^2 \Big]$\;
	Update actor as:\;
	$\fa_: \theta = \theta + \alpha \frac{1}{K} \nabla_{\theta} \sum_{\xi}\sum_t\Big[\sum_{i,j} n^{\xi}_{t}(i,j) \log \pi\big(j|i, o(i, \bs{\n}^{\xi}_{t})\big) f_{w}(i, j, o( \bs{\n}^{\xi}_t,i))\Big]$\;
	$\a_\; : \theta =\theta + \alpha \frac{1}{K} \nabla_{\theta} \sum_{\xi}\sum_t\Big[\sum_{i,j} n^{\xi}_{t}(i,j) \log \pi\big(j|i, o(i, \bs{\n}^{\xi}_{t})\big)\Big] \Big[\sum_{i,j} n^{\xi}_{t}(i,j) f_{w}(i, j, o( \bs{\n}^{\xi}_t,i))\Big]$\;
}
\Return $\theta, w$  \; %\vskip 2pt
  %}
\caption{\small Actor-Critic RL for {\cmdp}s}
\label{algo:rl}
\end{algorithm}
\DecMargin{1em}
Algorithm~\ref{algo:rl} shows the outline of our AC approach for {\cmdp}s. Lines 7 and 8 show two different options to train the critic. Line 7 represents critic update based on local value functions, also referred to as factored critic update ($\fc_$). Line 8 shows update based on global reward or global critic update ($\c_$). Line 10 shows the policy gradient computed using theorem~\ref{thm:valueFunctionDecomposition} ($\fa_$). Line 11 shows how the gradient is computed by directly using $f_w$ from eq.~\eqref{eq:facvf} in eq.~\ref{eq:normalPolicyUpdate}.

%\begin{align}
%V_t^\xi(i, j) = \sum_{s_{T:H}, a_{T:H}} P(s_{T:H}, a_{T:H} | \bs{\n}_{t:H}^\xi) \bigg[  \sum_{t=T}^H r_t(s_t, a_t, o(\bs{\n}_t)) \bigg] = \mathbb{E}\big[\sum_{t'=t}^H r_{t'}^m |s^m_t \!=\! i, a_t^m \!=\! j, n_{1:H}^\xi  \big]
%\end{align}

\section{Experiments}

This section compares the performance of our AC approach with two other approaches for solving {\cmdp}s---Soft-Max based flow update (SMFU)~\citep{varakantham2012decision}, and the Expectation-Maximization (EM) approach~\citep{NguyenKL17}. SMFU  can only optimize policies where an agent's action only depends on its local state, $\pi(a_t^m|s_t^m)$, as it approximates the effect of counts $\bs{\n}$ by computing the \textit{single} most likely count vector during the planning phase. The EM approach can optimize count-based piecewise linear policies where $\pi_t(a_t^m|s_t^m, \cdot)$ is a piecewise function over the space of all possible count observations $o_t$.

Algorithm 1 shows two ways of updating the critic (in lines 7, 8) and two ways of updating the actor (in lines 10, 11) leading to 4 possible settings for our actor-critic approach---$\fafc$, $\ac$, $\afc$, $\fac$. We also investigate the properties of these different actor-critic approaches. The neural network structure and other experimental settings are provided in the appendix.

For fair comparisons with previous approaches, we use three different models for counts-based observation $o_t$. In `o0' setting, policies depend only on agent's local state $s_t^m$ and not on counts. In `o1' setting, policies depend on the local state $s_t^m$ and the single count observation $n_t(s_t^m)$. That is, the agent can only observe the count of other agents in its current state $s_t^m$. In `oN' setting, the agent observes its local state $s_t^m$ and also the count of other agents from a local neighborhood (defined later) of the state $s_t^m$. The `oN' observation model provides the most information to an agent. However, it is also much more difficult to optimize as policies have more parameters. The SMFU only works with `o0' setting; EM and our actor-critic approach work for all the settings.

\textbf{Taxi Supply-Demand Matching: } We test our approach on this real-world domain described in section~2, and introduced in~\citep{varakantham2012decision}. In this problem, the goal is to compute taxi policies for optimizing the total revenue of the fleet. The data contains GPS traces of taxi movement in a large Asian city over 1 year. We use the observed demand information extracted from this dataset. 
On an average, there are around 8000 taxis per day (data is not exhaustive over all taxi operators). The city is divided into 81 zones and the plan horizon is 48 half hour intervals over 24 hours. For details about the environment dynamics, we refer to~\citep{varakantham2012decision}.

\begin{figure}[!t]
	\centering
	\begin{subfigure}[b]{0.49\textwidth}
		\centering
		\includegraphics[width=\textwidth]{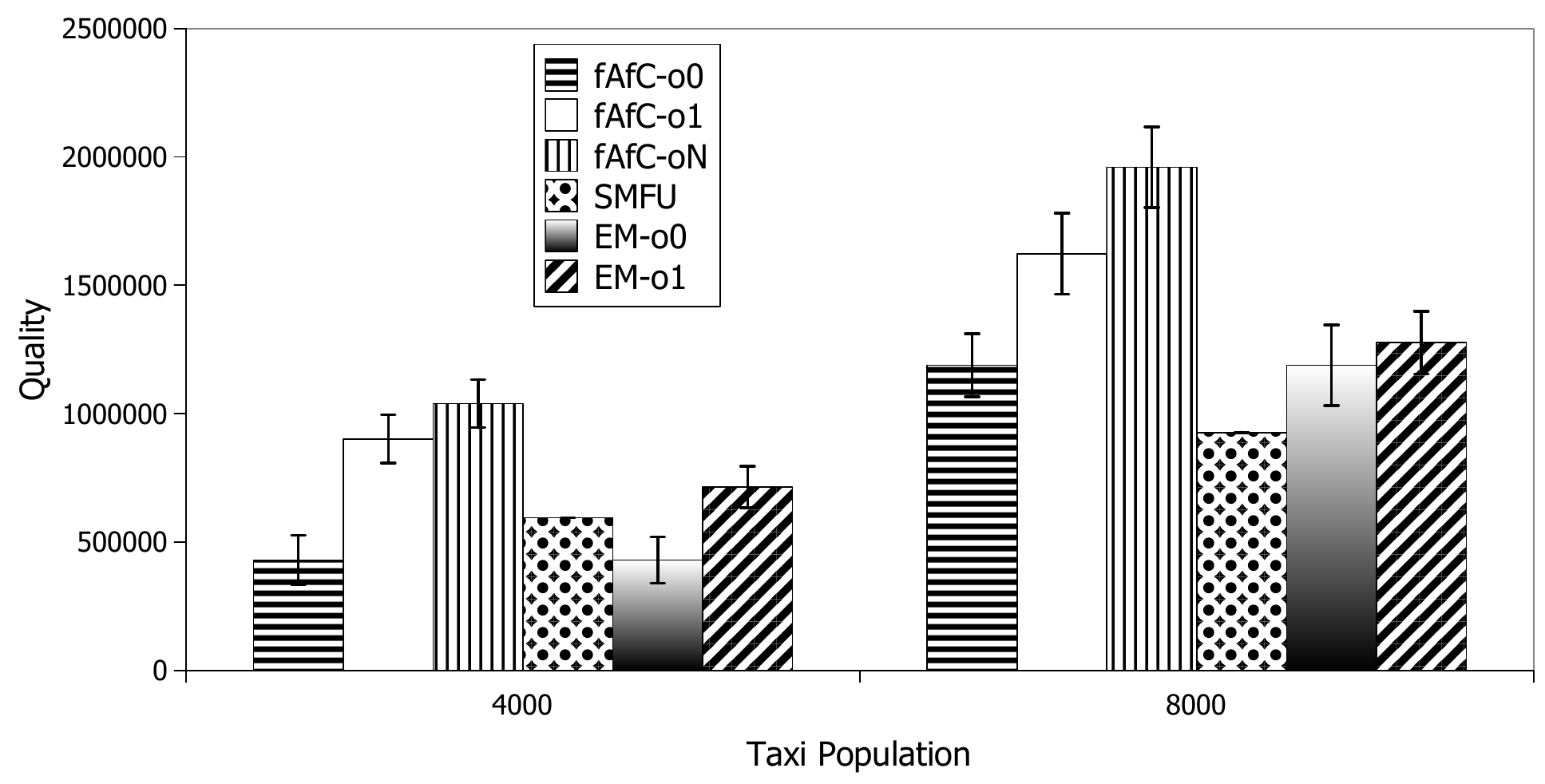}
		\caption{\small Solution quality with varying taxi population} 
	\end{subfigure}
	\begin{subfigure}[b]{0.49\textwidth}
		\centering
		\includegraphics[width=\textwidth]{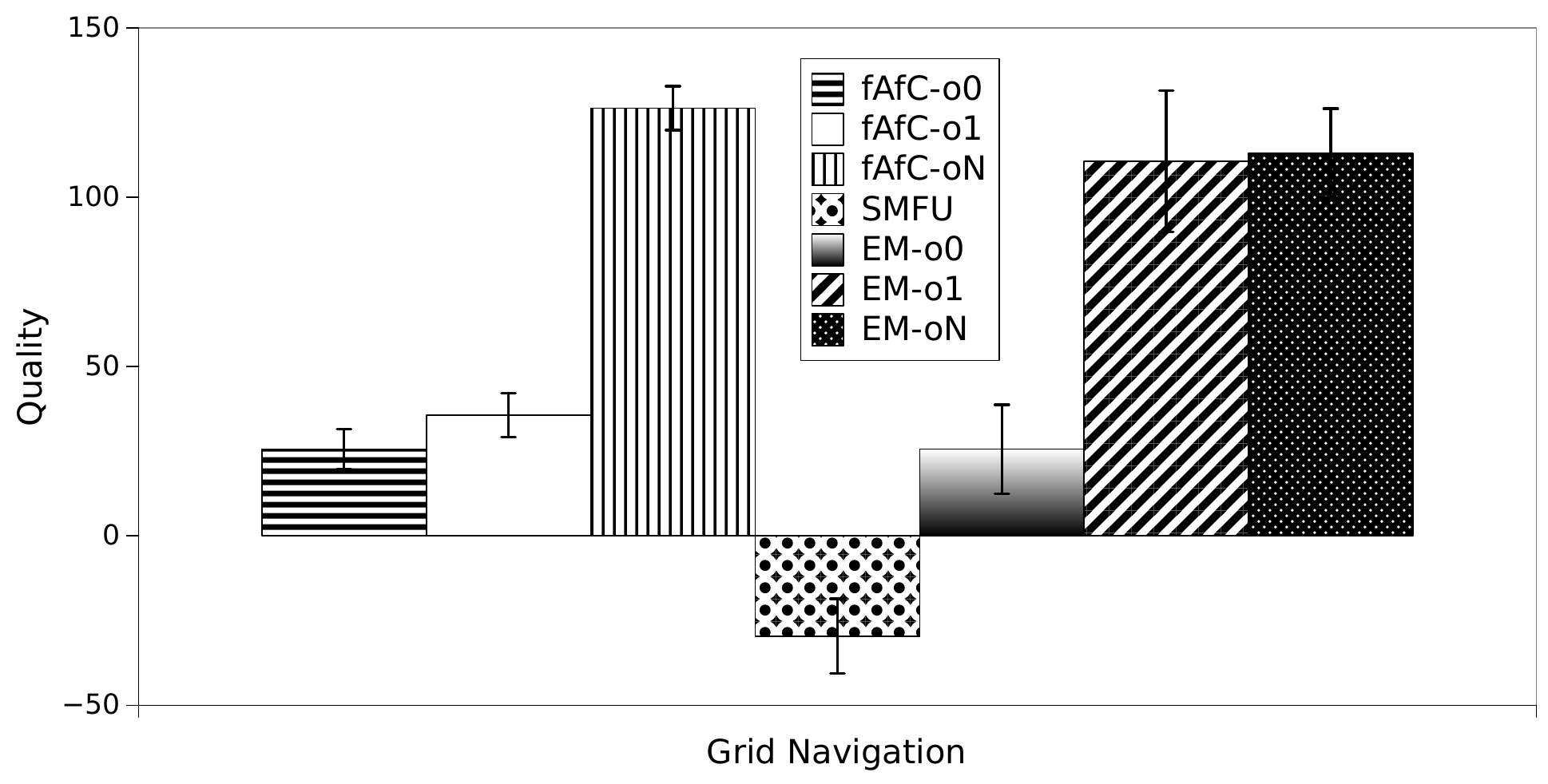}
		\caption{\small Solution quality in grid navigation problem} 
	\end{subfigure}
	\caption{\small Solution quality comparisons on the taxi problem and the grid navigation}
	\label{fig:results}
\end{figure}
\begin{figure}[!t]
	\centering	
	\begin{subfigure}[b]{\textwidth}
		\centering
		\includegraphics[scale=0.32]{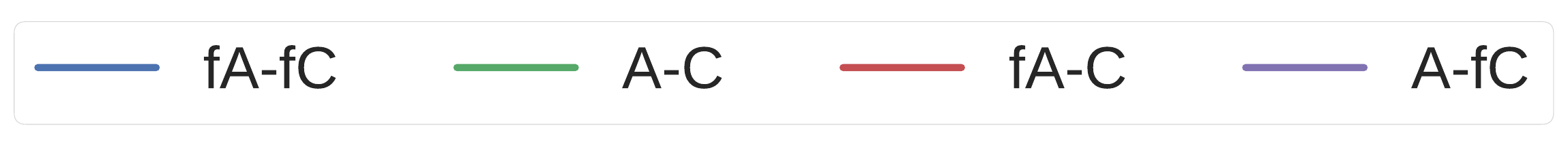}
	\end{subfigure}
	\\
	\begin{subfigure}[b]{0.32\textwidth}
		\centering
		\includegraphics[scale=0.23]{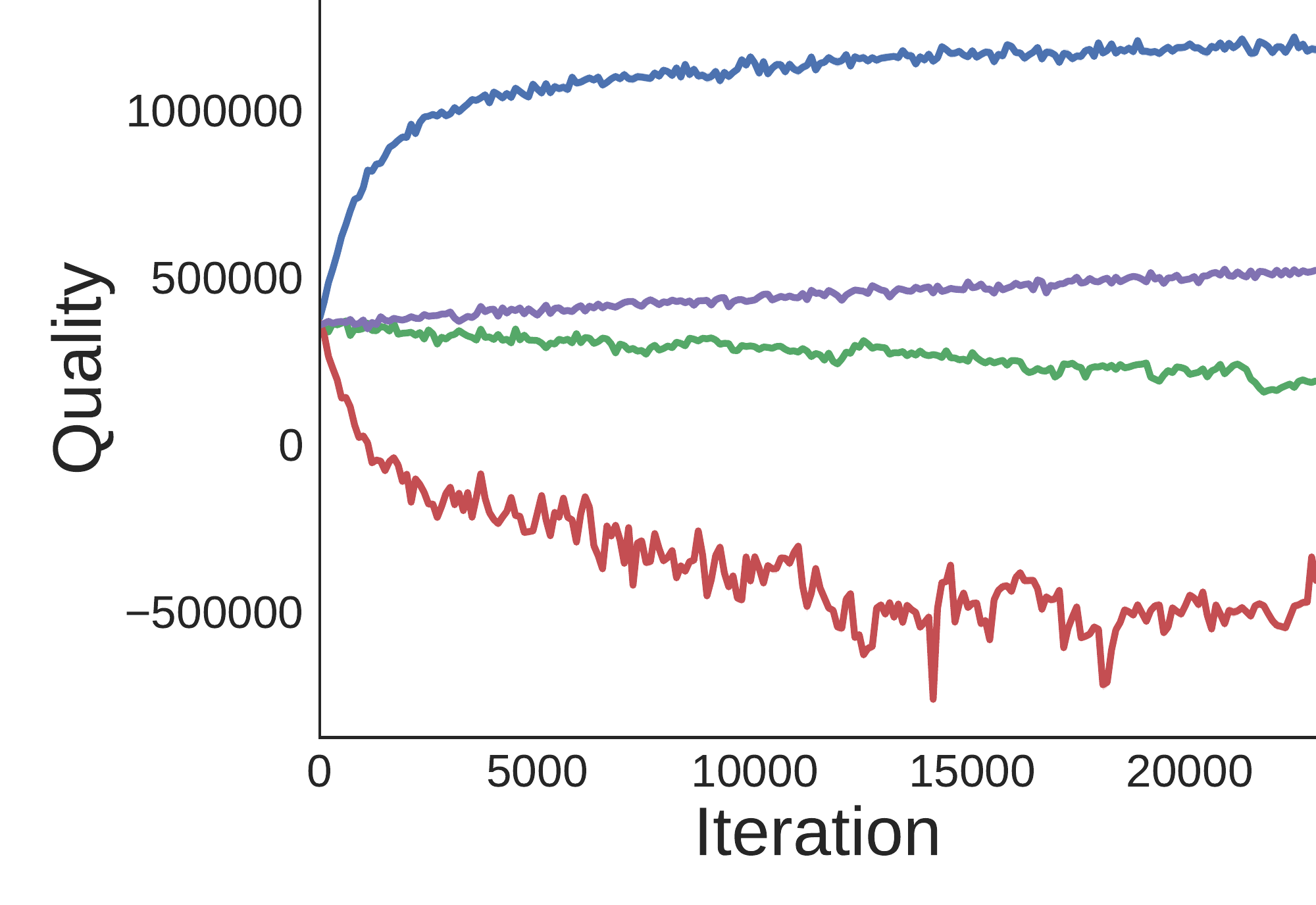}
		\caption{\small AC convergence with `o0'} 
	\end{subfigure}\hspace{1.7pt}
	\begin{subfigure}[b]{0.32\textwidth}
		\centering
		\includegraphics[scale=0.23]{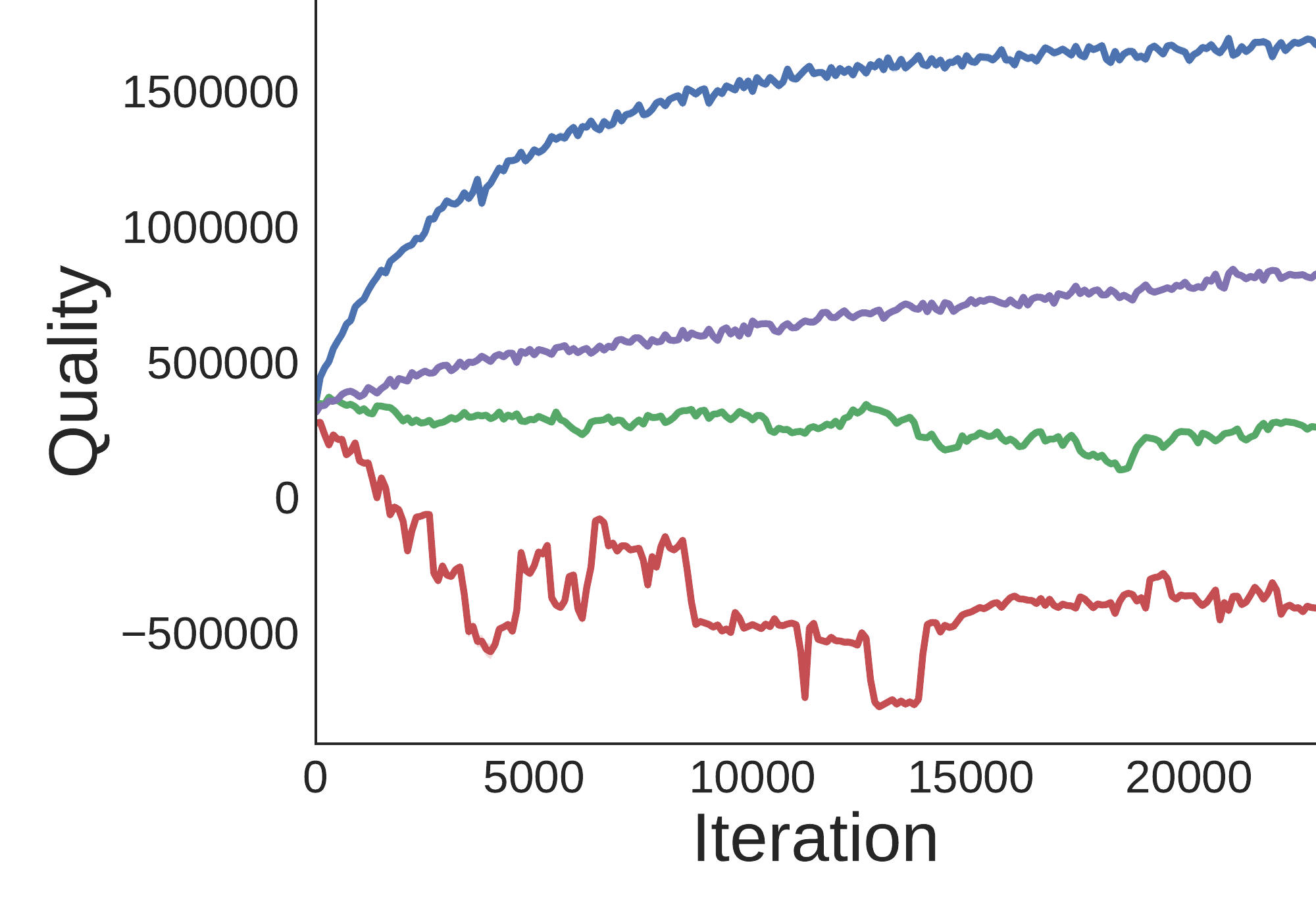}
		\caption{\small AC convergence with `o1'} 
	\end{subfigure}\hspace{1.7pt}
	\begin{subfigure}[b]{0.32\textwidth}
		\centering
		\includegraphics[scale=0.23]{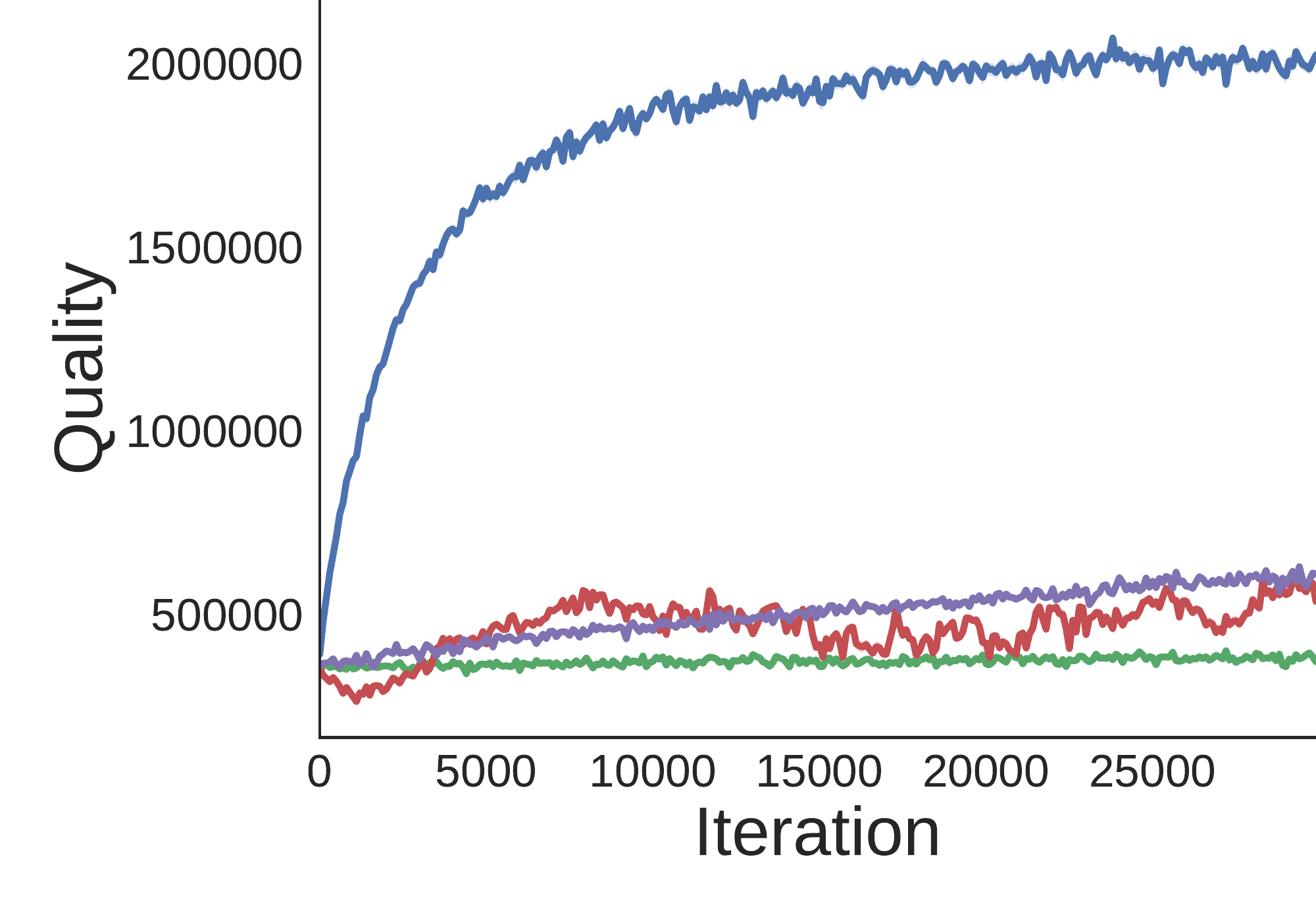}
		\caption{\small AC convergence with `oN'} 
	\end{subfigure}
	\caption{\small Convergence of different actor-critic variants on the taxi problem with 8000 taxis }
	\label{fig:conv}
\end{figure}

Figure~\ref{fig:results}(a) shows the quality comparisons among different approaches with different observation models (`o0', `o1' and `oN'). We test with total number of taxis as 4000 and 8000 to see if taxi population size affects the relative performance of different approaches. The y-axis shows the average per day profit for the entire fleet. For the `o0' case, all approaches ($\fafc$-`o0', SMFU, EM-`o0') give similar quality with $\fafc$-`o0' and  EM-`o0' performing slightly better than SMFU for the 8000 taxis. For the `o1' case, there is sharp improvement in quality by $\fafc$-`o1' over $\fafc$-`o0'  confirming that taking count based observation into account results in better policies. Our approach $\fafc$-`o1' is also significantly better than the policies optimized by EM-`o1' for both 4000 and 8000 taxi setting.

To further test the scalability and the ability to optimize complex policies by our approach in the `oN' setting, we define the neighborhood of each state (which is a zone in the city) to be the set of its geographically connected zones based on the zonal decomposition shown in~\citep{NguyenKL17}. On an average, there are about 8 neighboring zones for a given zone, resulting in 9 count based observations available to the agent for taking decisions. Each agent observes both the taxi count and the demand information from such neighboring zones. In figure~\ref{fig:results}(a), $\fafc$-`oN' result clearly shows that taking multiple observations into account significantly increases solution quality---$\fafc$-`oN' provides an increase of 64\% in quality over $\fafc$-`o0' and 20\% over $\fafc$-`o1' for the 8000 taxi case. For EM-`oN', we used a bare minimum of 2 pieces per observation dimension (resulting in $2^9$ pieces per time step). We observed that EM was unable to converge within 30K iterations and provided even worse quality than EM-`o1' at the end. These results show that despite the larger search space, our $\fafc$ approach can effectively optimize complex policies whereas the tabular policy based EM approach was ineffective for this case. 

%In contrast, for EM-`oN' is unable provide good solution quality as it does not converge within 
Figures~\ref{fig:conv}(a-c) show the quality Vs. iterations for different variations of our actor critic approach---$\fafc$, $\ac$, $\afc$, $\fac$---for the `o0', `o1' and the `oN' observation model. These figures clearly show that using factored actor and the factored critic update in $\fafc$ is the most reliable strategy over all the other variations and for all the observation models. Variations such as $\ac$ and $\fac$ were not able to converge at all despite having exactly the same parameters as $\fafc$. These results validate different strategies that we have developed in our work to make vanilla AC converge faster for large problems.

\textbf{Robot navigation in a congested environment: } We also tested on a synthetic benchmark introduced in~\citep{NguyenKL17}. The goal is for a population of robots ($=20$) to move from a set of initial locations to a goal state in a 5x5 grid. If there is congestion on an edge, then each agent attempting to cross the edge has higher chance of action failure. Similarly, agents also receive a negative reward if there is edge congestion. On successfully reaching the goal state, agents receive a positive reward  and transition back to one of the initial state. We set the horizon to 100 steps.

Figure~\ref{fig:results}(b) shows the solution quality comparisons among different approaches. In the `oN' observation model, the agent observes its 4 immediate neighbor node's count information. In this problem, SMFU performed worst, $\fafc$ and EM both performed much better. As expected $\fafc$-`oN' provides the best solution quality over all the other approaches. In this domain, EM is competitive with $\fafc$ as for this relatively smaller problem with 25 agents, the space of counts is much smaller than in the taxi domain. Therefore, EM's piecewise policy is able to provide a fine grained approximation over the count range.

\section{Summary}

We addressed the problem of collective multiagent planning where the collective behavior of a population of agents affects the model dynamics. We developed a new actor-critic method for solving such collective planning problems within the {\cmdp} framework. We derived several new results for {\cmdp}s such as the policy gradient derivation, and the structure of the compatible value function. To overcome the slow convergence of the vanilla actor-critic method we developed multiple techniques based on value function factorization and training the critic using individual value function of agents. Using such techniques, our approach provided significantly better quality than previous approaches, and proved scalable and effective for optimizing policies in a real world taxi supply-demand problem and a synthetic grid navigation problem.
\section{ Acknowledgments}
This research project is supported by National Research Foundation Singapore under its Corp Lab @ University scheme and Fujitsu Limited. First author is also supported by A$^\star$STAR graduate scholarship.
\newpage

\bibliography{network}
\newpage
\appendix
\appendixpage

\section{Distribution Over Counts}
\label{app:a}

We show the distribution directly over the count tables $\bs{\n}_{1:T}$ as shown in~\cite{NguyenKL17}. The distribution $P(\bn_{1:T};\pi)$ is defined as:
\begin{equation}
P(\bs{\n}_{1:T};\pi) = h(\bs{\n}_{1:T}) f(\bs{\n}_{1:T};\pi)
\label{eq:CDGM}
\end{equation}
where $f(\bs{\n}_{1:T};\pi)$ is given as:
\begin{align}
&f(\bs{\n}_{1:T};\pi) = \prod_{i\in S} P(i)^{n_{1}(i) } \prod_{t=1}^{T-1}\prod_{i,j,i'}  \bigg[\pi_t(j|i,  o_{t}(i, \bs{\n}_t^s)^{n_{t}(i,j)} \phi_t(i'|i, j, \bs{\n}_t^s)^{n_{t}(i, j, i')}\bigg]\nonumber\\
& \hspace{50pt} \prod_{i,j}  \pi_T(j | i, o_{T}(i, \bs{\n}_T^s))^{n_{T}(i, j)} \label{eq:joint}
\end{align}
where $\bs{\n}_t^s$ is the count table $(n_t(i)\; \forall i\in S)$ consisting of the count value for each state $i$ at time $t$.

The function $h(\bs{\n}_{1:T})$ counts the total number of ordered $M$ state-action trajectories with sufficient statistic equal to $\bs{\n}$, given as:
{
	\begin{align}
	&h(\bs{n}_{1:T})\= \frac{M!}{\prod_{i\in S}n_{1}(i)! } \bigg[  \prod_{t = 1}^{T-1}\prod_{i\in S}\frac{n_{t}(i)!}{\prod_{ i'\in S, j\in A}n_{t}(i, j, i')!}\bigg] \times\bigg[\prod_{i\in S}\frac{n_{t}(i)!}{\prod_{ j\in A}n_{t}(i, j)!}\bigg] \times \mathbb{I}[\bn_{1:T}\in \Omega_{1:T}] \label{eq:perm}
	\end{align}}

\noindent Set $\Omega_{1:T}$ is the set of all allowed consistent count tables as:
{
	\begin{align}
	\label{eq:norm}&\sum_{i\in S}n_{t}(i) \= M\; \forall t\;; \sum_{j\in A} n_{t}(i, j) \= n_{t}(i)\; \forall j, \forall t\\
	\label{eq:marg}&\sum_{i'} n_{t}(i, j, i') \= n_{t}(i, j) \; \forall i\in S, j\in A, \forall t
	\end{align}

	%\begin{proposition}
	%	The expected cumulative reward $ V_t(\pi) =\mathbb{E}_{\bs{s}_{t:H}, \bs{a}_{t:H}}[\sum_{t = t'}^H \sum_m r^m_{t'}]$ over periods $ t, \ldots, H $ induced by $ \pi $ is given by
	%	\begin{equation}
	%	\label{eq:JointValueFunction}
	%	V_t(\pi) = \sum_{\bs{s}_{t}, \bs{a}_{t}} P^{\pi}(\bs{s}_{t}, \bs{a}_{t}|b_0,\pi)Q^{\pi}_t(\bs{s}_{t}, \bs{a}_{t})
	%	\end{equation}
	%	in which $ P^{\pi}(\bs{s}_{t}, \bs{a}_{t}|b_0,\pi) = \sum_{\bs{s}_{1:t-1},\bs{a}_{1:t-1}} P^{\pi}(\bs{s}_{1:t}, \bs{a}_{1:t}|b_0,\pi) $ is the distribution of the joint state-action $ \bs{s}_{t}, \bs{a}_{t} $ under the policy $ \pi $ and the value function  $ Q^{\pi}_t(\bs{s}_{t}, \bs{a}_{t}) $ is computed by
	%	\begin{equation}
	%	\label{eq:CGMGlobalQFunction}
	%	Q^{\pi}_t(\bs{s}_{t}, \bs{a}_{t}) = r_t(\bs{s}_{t}, \bs{a}_{t}) + \sum_{\bs{s}_{t+1}, \bs{a}_{t+1}}P^{\pi}(\bs{s}_{t+1}, \bs{a}_{t+1}|\bs{s}_{t}, \bs{a}_{t})Q^{\pi}_{t+1}(\bs{s}_{t+1}, \bs{a}_{t+1})
	%	\end{equation}
	%\end{proposition}
	%\begin{proof}
	%This can be shown by consider the MDP for joint state and action $\bs{s}_t, \bs{a}_t$.
	%\end{proof}
	\section{Policy gradient in {\cmdp}s}
	\label{app:b}
	
	In following part, we show the policy gradient in {\cmdp}s with respect to the accumulated reward at the first time period $V_0$. The proof is similar to \cite{Sutton:1999}'s proof.
	\begin{align}
	&\frac{\partial V_0}{\partial \theta} = \sum_{\bs{s}_0,\bs{a}_0}\nabla_{\theta}\bigg( P^\pi(\bs{s}_0,\bs{a}_0|b_0, \pi)Q^{\pi}_0(\bs{s}_0,\bs{a}_0)\bigg)\\
	%&\text{use the derivative formula for the product $P^\pi(\bs{s}_0,\bs{a}_0|b_0, \pi)Q^{\pi}_0(\bs{s}_0,\bs{a}_0)$, we have}\nonumber\\
	&= \sum_{\bs{s}_0,\bs{a}_0}Q^{\pi}_0(\bs{s}_0,\bs{a}_0)\nabla_\theta P^\pi(\bs{s}_0,\bs{a}_0|b_0, \pi)+  \sum_{\bs{s}_0,\bs{a}_0}P(\bs{s}_0,\bs{a}_0|b_0, \pi)\nabla_\theta Q^{\pi}_0(\bs{s}_0,\bs{a}_0) \\
	&\text{using the Q function definition for {\cmdp}s  and taking the derivative we get}\nonumber\\
	&= \sum_{\bs{s}_0,\bs{a}_0} Q^{\pi}_0(\bs{s}_0,\bs{a}_0)\nabla_\theta P^\pi(\bs{s}_0,\bs{a}_0|b_0, \pi)\nonumber\\
	&\hspace{0cm}+   \sum_{\bs{s}_0,\bs{a}_0}P(\bs{s}_0,\bs{a}_0|b_0, \pi)\nabla_\theta \Big[\sum_{\bs{s}_1,\bs{a}_1} P(\bs{s}_1,\bs{a}_1|\bs{s}_0,\bs{a}_0,\pi)Q^{\pi}_1(\bs{s}_1,\bs{a}_1)\Big] \\
	&\text{If we continue unrolling out the terms in the above expression, we get}\nonumber\\
	&= \sum_t\sum_{\bs{s}_{1:t},\bs{a}_{1:t}}Q^{\pi}_t(\bs{s}_t,\bs{a}_t)P(\bs{s}_{t-1},\bs{a}_{t-1}|b_0, \pi)\nabla_\theta P(\bs{s}_t,\bs{a}_t|\bs{s}_{t-1},\bs{a}_{t-1}; \pi) \label{eq:logTrick1}\\\
	&\text{this can be re-written use the log trick}\nonumber\\
	&= \sum_t\sum_{\bs{s}_{1:t},\bs{a}_{1:t}}Q^{\pi}_t(\bs{s}_t,\bs{a}_t)P(\bs{s}_t,\bs{a}_t|b_0, \pi)\nabla_\theta \log P(\bs{s}_t,\bs{a}_t|\bs{s}_{t-1},\bs{a}_{t-1}, \pi) \label{eq:logTrick2}\\
	&= E_{\bs{s}_t,\bs{a}_t|b_0, \pi}\Big[\sum_tQ^{\pi}_t(\bs{s}_t,\bs{a}_t)\nabla_\theta \log P(\bs{s}_t,\bs{a}_t|\bs{s}_{t-1},\bs{a}_{t-1}, \pi)\Big]\nonumber\\
	&= E_{\bs{s}_t,\bs{a}_t|b_0, \pi}\Big[\sum_tQ^{\pi}_t(\bs{s}_t,\bs{a}_t)\nabla_\theta \log P(\bs{a}_t|\bs{s}_t, \pi)\Big]\label{eq:CGMPolicyGradExact}
	\end{align}
	
	Next, we simplify the gradient term $\nabla_\theta \log P(\bs{a}_t|\bs{s}_t, \pi)$ as:
	\begin{proposition} We have
		\label{thm:policy-log-derivative}
		\begin{align}
		\nabla_\theta \log P(\bs{a}_t|\bs{s}_t ) =  \sum_m \nabla_\theta \log \Big( \pi_t^m(a^m_t|o(s^m_t, \bn_{\bs{s}_t})) \Big)
		\end{align}
	\end{proposition}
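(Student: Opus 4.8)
The plan is to show that, conditioned on the joint state $\bs{s}_t$, the joint action $\bs{a}_t$ factorizes into a product of the individual agent policies, and then to differentiate the resulting logarithm term by term.

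First I would observe that the count table $\bs{\n}_{\bs{s}_t} = (n_t(i)\;\forall i\in S)$ is a \emph{deterministic} function of the joint state $\bs{s}_t=(s_t^1,\dots,s_t^M)$, since $n_t(i)=\sum_{m=1}^M \mathbb{I}^m_t(i)$. Hence each agent's count-based observation $o(s_t^m,\bs{\n}_{\bs{s}_t})$ is also a deterministic function of $\bs{s}_t$. By the DBN semantics in Figure~\ref{fig:cmdp}, once $\bs{s}_t$ is fixed every agent $m$ samples its action $a_t^m$ independently from $\pi^m_t\big(\cdot\,|\,s_t^m, o(s_t^m,\bs{\n}_{\bs{s}_t})\big)$, the only coupling among agents being through the shared counts, which are themselves a function of $\bs{s}_t$. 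Therefore
\begin{align}
P(\bs{a}_t\,|\,\bs{s}_t) = \prod_{m=1}^M \pi^m_t\big(a_t^m\,\big|\,s_t^m, o(s_t^m,\bs{\n}_{\bs{s}_t})\big).\nonumber
\end{align}

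Then I would take logarithms to convert the product into a sum, $\log P(\bs{a}_t\,|\,\bs{s}_t) = \sum_{m} \log \pi^m_t\big(a_t^m\,|\,s_t^m, o(s_t^m,\bs{\n}_{\bs{s}_t})\big)$, and apply $\nabla_\theta$. Because $\bs{s}_t$ is held fixed in this conditional probability, so is $\bs{\n}_{\bs{s}_t}$ and every observation argument $o(s_t^m,\bs{\n}_{\bs{s}_t})$; none of these carry any $\theta$-dependence. Linearity of the gradient then gives $\nabla_\theta \log P(\bs{a}_t\,|\,\bs{s}_t) = \sum_m \nabla_\theta \log \pi^m_t\big(a_t^m\,|\,o(s_t^m,\bs{\n}_{\bs{s}_t})\big)$, which is the claimed identity. (The simpler homogeneous case, $\pi^m_t = \pi_t$ for all $m$, is immediate from this.)

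The one place requiring care — and the main obstacle — is the justification of the factorization of $P(\bs{a}_t\,|\,\bs{s}_t)$: one must argue precisely from the graphical-model structure that conditioning on the \emph{entire} joint state $\bs{s}_t$ makes the agents' action choices mutually conditionally independent, and that the observation map $o(\cdot)$ introduces no additional randomness or parameter dependence beyond what is already captured by $\bs{s}_t$. Once this is established, the rest is the same ``log of a product'' manipulation already used to pass from~\eqref{eq:logTrick2} to~\eqref{eq:CGMPolicyGradExact}.
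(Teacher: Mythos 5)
Your proof is correct and follows essentially the same route as the paper's: factorize $P(\bs{a}_t|\bs{s}_t)$ into the product of individual policies, take logarithms, and use linearity of the gradient. The only difference is that the paper simply asserts the product factorization as immediate from the model definition, whereas you spell out the justification (counts and observations being deterministic functions of $\bs{s}_t$, and conditional independence of actions given $\bs{s}_t$ from the DBN), which is a harmless and arguably welcome elaboration rather than a genuine obstacle.
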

	\begin{proof}
		We simplify the above gradient as following: 
		\begin{align}
		\nabla_\theta \log P(\bs{a}_t|\bs{s}_t )  &= \nabla_\theta \log \Big( \prod_{m}\pi_t^m(a^m_t|o(s^m_t, \bn_{\bs{s}_t})) \Big) \nonumber\\
		&= \sum_m\nabla_\theta \log \Big( \pi_t^m(a^m_t|o(s^m_t, \bn_{\bs{s}_t})) \Big)\label{eq:MAPolicyGradExact}
		\end{align}
	\end{proof}
	Notice that we have proved the result in a general setting where each agent $m$ has a different policy $\pi^m$. In a homogeneous agent system (when each agent is of the same type and has the same policy $\pi$), the last equation  can be simplified by grouping agents taking similar action in similar state to give us:
	\begin{align}
	\label{eq:simplifiedLogPolicy}
	\nabla_\theta \log P(\bs{a}_t|\bs{s}_t ) = \sum_{i\in S, j\in A}{n_{t}}(i,j)  \nabla_\theta  \log \pi_t(j|o(i, \bn_{\bs{s}_t}))
	\end{align}
	Using the above results, the final policy gradient expression for {\cmdp}s is readily proved.
	\begin{theorem}
		For any {\cmdp}, the policy gradient is given as:
		\begin{equation}
		\label{eq:normalPolicyUpdate}
		\nabla_\theta V_1(\pi)= \sum_{t=1}^H E_{\bs{s}_t,\bs{a}_t|b_o, \pi} \bigg[ Q^{\pi}_t(\bs{s}_t,\bs{a}_t) \sum_{i\in S, j\in A}{n_{t}}(i,j)  \nabla_\theta  \log \pi_t\big(j|o(i, \bs{\n}_{\bs{s}_t})\big)\bigg]
		\end{equation} 
	\end{theorem}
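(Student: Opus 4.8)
The plan is to reproduce the classical policy gradient argument of \cite{Sutton:1999} at the level of the joint state--action space $(\bs{s}_t,\bs{a}_t)$, and only at the end to exploit the product structure of the {\cmdp} to collapse per-agent sums into count sums. First I would start from the value at the first time step, $V_1(\pi)=\sum_{\bs{s}_1,\bs{a}_1} P^{\pi}(\bs{s}_1,\bs{a}_1|b_o,\pi)\,Q^{\pi}_1(\bs{s}_1,\bs{a}_1)$, and differentiate using the product rule, getting one term carrying $\nabla_\theta P^{\pi}(\bs{s}_1,\bs{a}_1|b_o,\pi)$ and one carrying $\nabla_\theta Q^{\pi}_1$. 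Substituting the recursion~\eqref{eq:qfunc} into the latter, and using that both the immediate reward $r_t(\bs{s}_t,\bs{a}_t)$ and the transition kernel $\phi_t$ are independent of $\theta$, the $\nabla_\theta Q^{\pi}_1$ term expands into a $\nabla_\theta P^{\pi}$ contribution at $t=2$ plus a $\nabla_\theta Q^{\pi}_2$ term; iterating over the horizon $H$ unrolls this into the telescoping sum
\[
\nabla_\theta V_1(\pi) = \sum_{t=1}^H \sum_{\bs{s}_{1:t},\bs{a}_{1:t}} Q^{\pi}_t(\bs{s}_t,\bs{a}_t)\, P^{\pi}(\bs{s}_{t-1},\bs{a}_{t-1}|b_o,\pi)\, \nabla_\theta P^{\pi}(\bs{s}_t,\bs{a}_t|\bs{s}_{t-1},\bs{a}_{t-1};\pi).
\]

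Next I would apply the log-derivative identity $\nabla_\theta P^{\pi}=P^{\pi}\,\nabla_\theta\log P^{\pi}$ to the one-step factor, which merges the probability weights into the on-policy marginal $P^{\pi}(\bs{s}_t,\bs{a}_t|b_o,\pi)$ and rewrites the sum as $\sum_{t=1}^H \mathbb{E}_{\bs{s}_t,\bs{a}_t|b_o,\pi}\big[Q^{\pi}_t(\bs{s}_t,\bs{a}_t)\,\nabla_\theta\log P^{\pi}(\bs{s}_t,\bs{a}_t|\bs{s}_{t-1},\bs{a}_{t-1};\pi)\big]$. The {\cmdp}-specific step is then to observe that $P^{\pi}(\bs{s}_t,\bs{a}_t|\bs{s}_{t-1},\bs{a}_{t-1};\pi)$ factorizes as the $\theta$-free transition $\prod_m \phi_{t-1}(s^m_t|s^m_{t-1},a^m_{t-1},\bs{\n}_{\bs{s}_{t-1}})$ times the joint action probability $P^{\pi}(\bs{a}_t|\bs{s}_t;\pi)=\prod_m \pi_t(a^m_t|o(s^m_t,\bs{\n}_{\bs{s}_t}))$, so that $\nabla_\theta\log P^{\pi}(\bs{s}_t,\bs{a}_t|\bs{s}_{t-1},\bs{a}_{t-1};\pi)=\nabla_\theta\log P^{\pi}(\bs{a}_t|\bs{s}_t;\pi)=\sum_m \nabla_\theta\log \pi_t(a^m_t|o(s^m_t,\bs{\n}_{\bs{s}_t}))$.

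Finally, since all agents share the policy $\pi_t$, any two agents in the same local state $i$ taking the same action $j$ contribute identical gradient terms, so grouping the agent sum by $(i,j)$ replaces $\sum_m$ with $\sum_{i\in S,j\in A} n_t(i,j)\,\nabla_\theta\log \pi_t(j|o(i,\bs{\n}_{\bs{s}_t}))$, giving exactly~\eqref{eq:normalPolicyUpdate}. I expect the main obstacle to be the bookkeeping in the unrolling step: one has to track the probability weights accumulated at each level of the recursion and check that they collapse precisely to the marginal $P^{\pi}(\bs{s}_{t-1},\bs{a}_{t-1}|b_o,\pi)$ multiplying $Q^{\pi}_t$, with no term double-counted or dropped. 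The remaining pieces --- the log-derivative trick, the factorization of the joint transition into $\theta$-free dynamics times the product policy, and the regrouping into counts --- are routine once the classical skeleton is in place.
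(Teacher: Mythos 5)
Your proposal follows essentially the same route as the paper's own proof: differentiate $V_1$ via the product rule, unroll the $Q$-recursion over the horizon into a telescoping sum, apply the log-derivative trick, drop the $\theta$-free transition factor so that only $\nabla_\theta \log P(\bs{a}_t|\bs{s}_t;\pi) = \sum_m \nabla_\theta \log \pi_t(a^m_t|o(s^m_t,\bs{\n}_{\bs{s}_t}))$ survives, and finally group identical per-agent terms into counts $n_t(i,j)$. The argument is correct and matches the paper's proof step for step, with your explicit factorization of the one-step kernel into $\theta$-free dynamics times the product policy being a slightly more careful spelling-out of what the paper leaves implicit.
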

	\begin{proof}
		This result is directly implied by substitute \eqref{eq:simplifiedLogPolicy} into \eqref{eq:CGMPolicyGradExact}.
	\end{proof}

	\section{Action Value Function Approximation For \cmdp}
	\label{sec:decomposedVF}
	
	We consider a special form of approximate value function
	\begin{equation}
	\label{eq:decomposedCountValueFunction}
	f_w(\bs{s}_t,\bs{a}_t) = \sum_{m} f^{m}_{w}(s^m_t, o(s^m_t, \bn_{\bs{s}_t}),a^m_t)
	\end{equation}
	
	There are 2 reasons to consider this form of approximate value function:
	\begin{itemize}
		\item This form will leads to the efficient update of policy gradient
		\item We can train this form efficiently if we can decompose the value function into sum of some individual value. Each component $ f^{m}_{w}(s^m_t, o(s^m_t, \bn_{\bs{s}_t}),a^m_t) $ can be understand as the contribution of each individual $m$ into the total value function.
	\end{itemize}
	
	One of important class of approximate value functions having this form is the \textit{compatible} value function. As shown in~\cite{Sutton:1999}, for compatible value functions, the policy gradient using the function approximator is equal to the true policy gradient. 
	
	\begin{proposition}
		The compatible value function approximation in {\cmdp}s has the form 
		\begin{equation}
		f_w(\bs{s}_t,\bs{a}_t) = \sum_{m} f^{m}_{w}(s^m_t, o(s^m_t, \bn_{\bs{s}_t}),a^m)\nonumber
		\end{equation}	
	\end{proposition}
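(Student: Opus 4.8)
The plan is to invoke the definition of the compatible value function from \cite{Sutton:1999} and show that, for {\cmdp}s, the particular parameterization dictated by compatibility is automatically of the additive-over-agents form. Recall that the compatibility condition requires $f_w$ to satisfy $\nabla_w f_w(\bs{s}_t,\bs{a}_t) = \nabla_\theta \log P(\bs{a}_t|\bs{s}_t,\pi)$ and $f_w$ to be linear in $w$, i.e. $f_w(\bs{s}_t,\bs{a}_t) = w^\top \nabla_\theta \log P(\bs{a}_t|\bs{s}_t,\pi)$ (up to a state-dependent baseline). So the entire argument reduces to computing $\nabla_\theta \log P(\bs{a}_t|\bs{s}_t,\pi)$ and observing that it is a sum of per-agent terms.

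First I would cite Proposition~\ref{thm:policy-log-derivative} (proved earlier in the appendix), which gives
\[
\nabla_\theta \log P(\bs{a}_t|\bs{s}_t) = \sum_m \nabla_\theta \log \pi_t^m\big(a^m_t \mid o(s^m_t, \bn_{\bs{s}_t})\big).
\]
Substituting this into the linear form $f_w = w^\top \nabla_\theta \log P(\bs{a}_t|\bs{s}_t,\pi)$ immediately yields
\[
f_w(\bs{s}_t,\bs{a}_t) = \sum_m w^\top \nabla_\theta \log \pi_t^m\big(a^m_t \mid o(s^m_t, \bn_{\bs{s}_t})\big),
\]
and I would then define $f^m_w(s^m_t, o(s^m_t,\bn_{\bs{s}_t}), a^m_t) := w^\top \nabla_\theta \log \pi_t^m\big(a^m_t \mid o(s^m_t,\bn_{\bs{s}_t})\big)$, which depends only on agent $m$'s local state, observation and action (the count table $\bn_{\bs{s}_t}$ enters only through the observation argument, exactly as required). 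This gives the claimed decomposition $f_w(\bs{s}_t,\bs{a}_t) = \sum_m f^m_w(s^m_t, o(s^m_t,\bn_{\bs{s}_t}), a^m_t)$.

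The main obstacle is being careful about what ``compatible'' means in this context and making sure the substitution is legitimate: I need the two standard compatibility conditions of \cite{Sutton:1999} to carry over verbatim to the {\cmdp} setting, i.e. that the joint policy here plays the role of the policy there, with joint-state $\bs{s}_t$ and joint-action $\bs{a}_t$. Since the joint dynamics of a {\cmdp} is just an ordinary (large) MDP over $(\bs{s}_t,\bs{a}_t)$ and the joint policy is $P(\bs{a}_t|\bs{s}_t,\pi) = \prod_m \pi_t^m(a_t^m|\cdot)$, the \cite{Sutton:1999} machinery applies directly, and the only {\cmdp}-specific ingredient is the product structure of the joint policy, which is precisely what Proposition~\ref{thm:policy-log-derivative} exploits. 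A secondary point worth a sentence is that an additive state-only baseline may be added to $f_w$ without affecting compatibility or the additive-over-agents structure, so the stated form is the natural representative of the compatible class.
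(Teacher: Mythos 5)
Your proposal is correct and matches the paper's own proof essentially step for step: both write the compatible approximator as $w^\top \nabla_\theta \log P(\bs{a}_t|\bs{s}_t)$, invoke the product form of the joint policy (Proposition~\ref{thm:policy-log-derivative}) to split the score function into per-agent terms, and define $f^m_w$ as $w^\top$ times agent $m$'s individual score. The only addition is your remark on state-dependent baselines, which the paper omits but which does not change the argument.
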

	\begin{proof}
		Recall from \cite{Sutton:1999}, the compatible value function approximates the value function $ Q(\bs{s}_t, \bs{a}_t) $ with linear value $ f_w(\bs{s}_t, \bs{a}_t) = w^T \phi(\bs{s}_t, \bs{a}_t)$, where $w$ denotes function parameter vector and $ \phi(\bs{s}_t, \bs{a}_t) $ is compatible feature vector computed from the policy $\pi$ as
		\begin{equation}
		\phi(\bs{s}_t, \bs{a}_t) = \nabla_{\theta} \log P(\bs{a}_t|\bs{s}_t)
		\end{equation}
		Applying this for {\cmdp}s and using the result from proposition~\ref{thm:policy-log-derivative}, we have the linear compatible feature in a {\cmdp} to be:
		\begin{equation}
		\phi(\bs{s}_t, \bs{a}_t) = \nabla_{\theta} \log P(\bs{a}_t |\bs{s}_t) = \sum_m\nabla_{\theta} \log \pi_t^m(a^m|o(s^m_t, \bn_{\bs{s}_t}))
		\end{equation}
		We can rearrange $f_w(\bs{s}_t,\bs{a}_t)$ as follows
		\begin{align}
		f_w(\bs{s}_t,\bs{a}_t) &= w^T\phi(\bs{s}_t,\bs{a}_t) = w^T\Big[ \sum_m \nabla_{\theta} \log \pi_t(a^m|o(s^m_t, \bn_{\bs{s}_t}))\Big]\\
		&=  \sum_m w^T \nabla_{\theta} \log \pi_t(a^m|o(s^m_t, \bn_{\bs{s}_t}))
		\end{align}
		If we set $ f^{m}_{w}(s^m_t, o(s^m_t, \bn_{\bs{s}_t}),a^m) \= w^T \nabla_{\theta} \log \pi_t(a^m|o(s^m_t, \bn_{\bs{s}_t})) $, the theorem is proved.
	\end{proof}
	
	We also prove the next result in a general setting with each agent having a different policy $\pi^m$.
	\begin{theorem}
		\label{thm:valueFunctionDecomposition}
		For any value function having the decomposition as:
		\begin{align}
		f_w(\bs{s}_t,\bs{a}_t) = \sum_{m} f^{m}_{w}\big(s^m_t, o(s^m_t, \bs{\n}_{\bs{s}_t}),a^m_t\big),
		\end{align} 
		the \textit{policy gradient} can be computed as
		\begin{align}
		\label{eq:polgrad}
		\nabla_\theta V_1(\pi) = \sum_{t=1}^H \mathbb{E}_{\bs{s}_t, \bs{a}_t}\Big[\sum_{m}\nabla_\theta  \log \pi^m\big(a^m_t|s_t^m, o(s_t^m, \bs{\n}_{\bs{s}_t})\big) f^{m}_{w}\big(s^m_t, o(s^m_t, \bs{\n}_{\bs{s}_t}),a^m_t\big)\Big]
		\end{align}
	\end{theorem}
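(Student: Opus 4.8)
The plan is to start from the exact policy gradient expression already established in \eqref{eq:CGMPolicyGradExact}, namely
\[
\nabla_\theta V_1(\pi) = \sum_{t=1}^H \mathbb{E}_{\bs{s}_t,\bs{a}_t|b_o,\pi}\Big[ Q^\pi_t(\bs{s}_t,\bs{a}_t)\,\nabla_\theta \log P(\bs{a}_t|\bs{s}_t,\pi)\Big],
\]
and then substitute the decomposed form $f_w(\bs{s}_t,\bs{a}_t)=\sum_m f^m_w(\cdot)$ for $Q^\pi_t$. The subtlety is that one cannot naively replace $Q^\pi_t$ by $f_w$ unless $f_w$ is the compatible approximation; so the first thing I would state carefully is that the identity we are proving holds for the gradient estimator induced by $f_w$ (and coincides with the true gradient when $f_w$ is compatible, by the preceding proposition). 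With that understood, I would expand $\nabla_\theta \log P(\bs{a}_t|\bs{s}_t) = \sum_{m'} \nabla_\theta \log \pi^{m'}\!\big(a^{m'}_t \mid s^{m'}_t, o(s^{m'}_t,\bs{\n}_{\bs{s}_t})\big)$ using Proposition~\ref{thm:policy-log-derivative}, and plug in $f_w = \sum_m f^m_w$, producing a double sum over agent indices $m$ and $m'$ inside the expectation.

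The crux of the argument is then to show that all cross terms $m\neq m'$ vanish in expectation, leaving only the diagonal $m=m'$ terms which give exactly \eqref{eq:polgrad}. For a fixed $t$ and a fixed pair $m\neq m'$, the contribution is
\[
\mathbb{E}_{\bs{s}_t,\bs{a}_t}\Big[ f^m_w\big(s^m_t,o(s^m_t,\bs{\n}_{\bs{s}_t}),a^m_t\big)\,\nabla_\theta \log \pi^{m'}\!\big(a^{m'}_t\mid s^{m'}_t, o(s^{m'}_t,\bs{\n}_{\bs{s}_t})\big)\Big].
\]
The key step is to condition on $\bs{s}_t$ and on the actions of all agents other than $m'$. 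Since $f^m_w$ depends only on $s^m_t$, the count table $\bs{\n}_{\bs{s}_t}$ (which is a function of $\bs{s}_t$ alone, not of the actions), and $a^m_t$, it is measurable with respect to this conditioning for $m\neq m'$; hence it can be pulled out of the inner expectation over $a^{m'}_t$. What remains is $\mathbb{E}_{a^{m'}_t}\big[\nabla_\theta \log \pi^{m'}(a^{m'}_t\mid \cdot)\big] = \sum_{j} \pi^{m'}(j\mid\cdot)\,\nabla_\theta \log \pi^{m'}(j\mid\cdot) = \nabla_\theta \sum_j \pi^{m'}(j\mid\cdot) = \nabla_\theta 1 = 0$, the standard score-function identity. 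So every off-diagonal term is zero, and summing the surviving $m=m'$ terms over $m$ and $t$ yields \eqref{eq:polgrad}.

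The main obstacle I anticipate is handling the conditioning cleanly: one has to be careful that $o(s^m_t,\bs{\n}_{\bs{s}_t})$ and the entire count table depend on the joint \emph{state} $\bs{s}_t$ but not on the joint \emph{action} $\bs{a}_t$, so that after conditioning on $\bs{s}_t$ the actions $a^1_t,\dots,a^M_t$ are conditionally independent with $a^{m'}_t\sim \pi^{m'}(\cdot\mid s^{m'}_t, o(s^{m'}_t,\bs{\n}_{\bs{s}_t}))$. This is exactly the structure of the DBN in Figure~\ref{fig:cmdp} (actions at time $t$ are generated independently given states at time $t$), so it should go through, but it is the place where the argument could break if the observation model secretly depended on actions. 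A secondary, purely cosmetic point is that I would present the proof in the general heterogeneous-policy setting (distinct $\pi^m$), since the homogeneous case and the count-based reformulation \eqref{eq:gradapprox} then follow by grouping agents by their $(i,j)$ as in \eqref{eq:simplifiedLogPolicy}.
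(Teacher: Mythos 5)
Your proposal is correct and follows essentially the same route as the paper's proof: substitute $f_w$ for $Q^\pi_t$ in the exact gradient \eqref{eq:CGMPolicyGradExact}, expand $\nabla_\theta\log P(\bs{a}_t|\bs{s}_t)$ into per-agent scores via Proposition~\ref{thm:policy-log-derivative}, and kill the off-diagonal $m\neq m'$ terms by conditioning on $\bs{s}_t$ (so that actions are conditionally independent and the counts are action-independent) and applying the score-function identity. Your explicit caveat that the identity concerns the estimator induced by $f_w$, coinciding with the true gradient only in the compatible case, is a correct and slightly more careful framing than the paper's.
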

	\begin{proof}
		Substitute the approximate value function $f_w(\bs{s}_t,\bs{a}_t)$ to $Q^{\pi}(\bs{s}_t,\bs{a}_t)$ in the policy gradient formula~\eqref{eq:CGMPolicyGradExact}, we have the policy gradient computed by approximate value function $f_w(\bs{s}_t,\bs{a}_t)$ to be
		\begin{align}
		\nabla_\theta V_1(\pi)&=	\sum_t\mathbb{E}_{\bs{s}_t, \bs{a}_t}\Big[\nabla_\theta  \log P(\bs{a}_t |\bs{s}_t, \theta) f_w(\bs{s}_t,\bs{a}_t)\Big]\\
		&=	\sum_t\mathbb{E}_{\bs{s}_t, \bs{a}_t}\Big[\frac{\partial  \log \prod_m\pi^m\big(a^m_t|s_t^m, o(s_t^m, \bs{\n}_{\bs{s}_t})\big)}{\partial \theta}\big(\sum_{m'} f^{m'}_{w}(s^{m'}_t, o(s^{m'}_t,  \bs{\n}_{\bs{s}_t}),a^{m'}_t)\big)\Big]\\
		&=	\sum_t\mathbb{E}_{\bs{s}_t, \bs{a}_t}\Big[\sum_{m}\nabla_\theta  \log \pi^m\big(a^m_t|s_t^m, o(s_t^m, \bs{\n}_{\bs{s}_t})\big) \big(\sum_{m'} f^{m'}_{w}(s^{m'}_t, o(s^{m'}_t,  \bs{\n}_{\bs{s}_t}),a^{m'}_t)\big)\Big]\label{eq:step3}
		\end{align}
		Let us simplify the inner summation for a specific $ m, t $ by looking at:
		\begin{align}
		&\mathbb{E}_{\bs{s}_t, \bs{a}_t}\Big[\nabla_\theta  \log \pi^m\big(a^m_t|s_t^m, o(s_t^m, \bs{\n}_{\bs{s}_t})\big) \big(\sum_{m'\neq m} f^{m'}_{w}(s^{m'}_t, o(s^{m'}_t,  \bs{\n}_{\bs{s}_t}),a^{m'}_t)\big)\Big]
		\end{align}
		Given the independence of value functions of other agents $m'\neq m$ w.r.t. the action $a_t^m$ of agent $m$, we have:
		\begin{align}
		&=\mathbb{E}_{\bs{s}_t}\bigg[\mathbb{E}_{a^m_t|\bs{s}_t} \bigg(\nabla_\theta  \log \pi^m\big(a^m_t|s_t^m, o(s_t^m, \bs{\n}_{\bs{s}_t}) \big)\sum_{m'\neq m} \mathbb{E}_{a^{m'}_t|\bs{s}_t} f^{m'}_{w}(s^{m'}_t, o(s^{m'}_t, \bn_t),a^{m'}_t)\bigg)\bigg]\\
		&=\mathbb{E}_{\bs{s}_t}\bigg[ \mathbb{E}_{a^m_t|\bs{s}_t} \bigg(\nabla_\theta  \log \pi^m\big(a^m_t|s_t^m, o(s_t^m, \bs{\n}_{\bs{s}_t}) \times constant\_to\_a^m_t\bigg) \bigg]\\
		&=0
		\end{align}
		Applying this to \eqref{eq:step3}, we can dismiss all the term of $ m'\neq m $ to simplify \eqref{eq:step3} into \eqref{eq:polgrad}.
	\end{proof}
	
	In the setting all agents are identical with same policy $\pi$, we can use the following simplification of the approximate action-value function:
	\begin{equation}
	\label{eq:DecomposedCGMVF}
	f_w(\bs{s}_t,\bs{a}_t) =  \sum_{m} f_{w}(s^m_t, o(s^m_t, \bn_{\bs{s}_t}),a^m_t) = \sum_{i, j}n_{t}(i,j)f_{w}(i, j, o(i, \bs{n}_{\bs{s}_t}) = f_w(\bs{n}_{\bs{s}_t\bs{a}_t})
	\end{equation} 
	
	\subsection{Count-based Policy Gradient Computation in {\cmdp}s}
	\begin{theorem}
		For any value function having the form: $$f_w(\bs{s}_t,\bs{a}_t) = \sum_{i, j} n_t(i, j) f_w\big(i, j, o(i, \bs{\n}_{\bs{s}_t})\big),$$ the \textit{approximate policy gradient} can be computed as:
		\begin{align}
		\label{eq:FpolicyUpdate}
		\mathbb{E}_{\bs{\n}_{1:H}\in \Omega_{1:H}} \bigg[ \sum_{t=1}^H\sum_{i\in S,j\in A} n_{t}(i,j)\nabla_\theta \log \pi\big(j|i, o(i, \bs{\n}_{t})\big) f_{w}(i, j, o(i, \bs{\n}_{t}))\bigg]
		\end{align}
	\end{theorem}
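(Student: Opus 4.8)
The plan is to start from the factored policy gradient in~\eqref{eq:gradapprox}, which is valid under the theorem's hypothesis (it is the specialization to identical agents of the value-function-decomposition result, using the homogeneous form~\eqref{eq:approxvf}):
\begin{align*}
\nabla_\theta V_1(\pi) = \sum_{t=1}^H\mathbb{E}_{\bs{s}_t, \bs{a}_t}\Big[\sum_{i,j} n_t(i,j)\, \nabla_\theta \log \pi\big(j|i, o(i, \bs{\n}_{\bs{s}_t})\big)\, f_{w}\big(i, j, o(i, \bs{\n}_{\bs{s}_t})\big)\Big],
\end{align*}
and to rewrite each term of the outer sum as an expectation over count tables alone. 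First I would observe that the bracketed summand, call it $g_t(\bs{s}_t,\bs{a}_t)$, is invariant under relabeling of agents: it depends on $(\bs{s}_t,\bs{a}_t)$ only through the count table $\bs{\n}_{\bs{s}_t\bs{a}_t}$, since each $n_t(i,j)$ is read directly off that table and the state-count table $\bs{\n}_{\bs{s}_t}$ (on which the observations $o(i,\cdot)$, and hence both $\pi$ and $f_w$, depend) is itself the marginal $n_t(i)=\sum_j n_t(i,j)$. Thus there is a well-defined function $\tilde g_t$ of the time-$t$ counts with $g_t(\bs{s}_t,\bs{a}_t)=\tilde g_t(\bs{\n}_t)$, namely $\tilde g_t(\bs{\n}_t)=\sum_{i,j} n_t(i,j)\,\nabla_\theta\log\pi\big(j|i,o(i,\bs{\n}_t)\big)\,f_w\big(i,j,o(i,\bs{\n}_t)\big)$.

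Next I would group the joint state-action pairs by the count table they induce. Writing $P^{\pi}(\bs{s}_t,\bs{a}_t|b_o,\pi)$ for the time-$t$ marginal of the joint distribution (as in Section~3), this gives
\begin{align*}
\mathbb{E}_{\bs{s}_t,\bs{a}_t}\big[g_t(\bs{s}_t,\bs{a}_t)\big] = \sum_{\bs{\n}_t}\Big(\sum_{(\bs{s}_t,\bs{a}_t):\,\bs{\n}_{\bs{s}_t\bs{a}_t}=\bs{\n}_t} P^{\pi}(\bs{s}_t,\bs{a}_t|b_o,\pi)\Big)\,\tilde g_t(\bs{\n}_t) = \sum_{\bs{\n}_t} P(\bs{\n}_t;\pi)\,\tilde g_t(\bs{\n}_t),
\end{align*}
where $P(\bs{\n}_t;\pi)$ is the time-$t$ marginal of the count distribution. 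Here I would invoke the sufficient-statistic property established by~\cite{NguyenKL17} and already used in~\eqref{eq:vf}: marginalizing the joint trajectory distribution onto the counts reproduces $P(\bs{\n}_{1:H};\pi)$ of~\eqref{eq:CDGM}, whose combinatorial factor $h(\bs{\n}_{1:H})$ in~\eqref{eq:perm} counts exactly the ordered agent trajectories consistent with a given count trajectory; projecting onto time $t$ identifies the inner sum above with $P(\bs{\n}_t;\pi)$. Summing over $t$ and folding the per-time marginals back into a single expectation over the full count trajectory $\bs{\n}_{1:H}\in\Omega_{1:H}$ yields~\eqref{eq:FpolicyUpdate}.

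I expect the only real obstacle to be this middle step --- justifying $\sum_{(\bs{s}_t,\bs{a}_t):\,\bs{\n}_{\bs{s}_t\bs{a}_t}=\bs{\n}_t} P^{\pi}(\bs{s}_t,\bs{a}_t|b_o,\pi)=P(\bs{\n}_t;\pi)$, i.e.\ that the count table is a genuine sufficient statistic, so that aggregating permutation-equivalent joint outcomes recovers the count distribution of Appendix~A. Since this is precisely the fact underpinning~\eqref{eq:vf} in~\cite{NguyenKL17}, I would cite it rather than re-derive the multinomial bookkeeping in $h(\bs{\n}_{1:H})$. Everything else is routine; one should additionally note that no hidden dependence on agent identity enters through $\pi$ or $f_w$, as both see the joint state only via $o(i,\bs{\n}_{\bs{s}_t})$, which is a function of the state-count table.
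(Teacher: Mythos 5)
Your proposal is correct and follows essentially the same route as the paper: start from the factored gradient of Theorem~\ref{thm:valueFunctionDecomposition} specialized to identical agents, observe that the integrand depends on $(\bs{s}_t,\bs{a}_t)$ only through the count tables, and convert the expectation over joint states and actions into one over counts via the sufficient-statistic identity $P(\bs{\n}_{1:H})=h(\bs{\n}_{1:H})f(\bs{\n}_{1:H})$. The only (immaterial) difference is bookkeeping: the paper groups full joint trajectories by their count trajectory and multiplies by $h(\cdot)$ in one step, whereas you aggregate per time step into the marginals $P(\bs{\n}_t;\pi)$ and then recombine by linearity of expectation.
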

	\begin{proof}
		From theorem \ref{thm:valueFunctionDecomposition} and \eqref{eq:DecomposedCGMVF}, we have
		\begin{equation}
		\label{eq:DecomposedCountMACompatiblePolicyGrad}
		\nabla_\theta V_1(\pi) = \sum_t\mathbb{E}_{\bs{s}_t, \bs{a}_t}\Big[\sum_{i,j} n_{t}(i,j)\frac{\partial  \log \pi\big(j|i, o(i, \bs{\n}_{\bs{s}_t})\big)}{\partial \theta} f_{w}(i, j, o(i, \bs{\n}_{\bs{s}_t}))\Big]
		\end{equation}  
		We can expand the above expression as:
		\begin{align}
		\nabla_\theta V_1(\pi)=\sum_{\bs{s}_{1:H},\bs{a}_{1:H}} P(\bs{s}_{1:H}, \bs{a}_{1:H}) \Big[\sum_{t=1}^H\sum_{i,j} n_{t}(i,j)\frac{\partial  \log \pi\big(j|i, o(i, \bs{\n}_{\bs{s}_t})\big)}{\partial \theta} f_{w}(i, j, o(i, \bs{\n}_{\bs{s}_t}))\Big]\nonumber
		\end{align}
		From~\cite{NguyenKL17}, we know that the probability $ P(\bs{s}_{1:H}, \bs{a}_{1:H})$ depends only on counts $\bs{\n}$ generated by the joint-state and action trajectory $(\bs{s}_{1:H},\bs{a}_{1:H})$ and is equal to $f(\bs{\n}_{1:T})$ in~\eqref{eq:joint}. Using this result, we have:
		\begin{align}
		\nabla_\theta V_1(\pi) = \sum_{\bs{s}_{1:H},\bs{a}_{1:H}} f(\bs{\n}_{1:H}) \Big[\sum_{t=1}^H\sum_{i,j} n_{t}(i,j)\frac{\partial  \log \pi\big(j|i, o(i, \bs{\n}_{\bs{s}_t})\big)}{\partial \theta} f_{w}(i, j, o(i, \bs{\n}_{\bs{s}_t}))\Big]\nonumber
		\end{align}
		Notice that the entire expression inside the summation above depends only on the resulting counts $\bs{\n}_{1:H}$. We also know from~\cite{NguyenKL17} that $h(\bs{\n}_{1:H})$ in~\eqref{eq:perm} counts the total number of ordered $M$ state-action trajectories with sufficient statistic equal to $\bs{\n}_{1:H}$. Therefore, we can replace the summation over $(\bs{s}_{1:H},\bs{a}_{1:H})$ by summation over all the possible valid counts $\bs{\n}_{1:H}\in \Omega_{1:H}$ and multiply the inner expression by $h(\cdot)$ to get:
		\begin{align}
		\nabla_\theta V_1(\pi) &= \sum_{\bs{\n}_{1:H}\in \Omega_{1:H}}h(\bs{\n}_{1:H}) f(\bs{\n}_{1:H}) \Big[\sum_{t=1}^H\sum_{i,j} n_{t}(i,j)\frac{\partial  \log \pi\big(j|i, o(i, \bs{\n}_{t})\big)}{\partial \theta} f_{w}(i, j, o(i, \bs{\n}_{t}))\Big]\nonumber \\
		&= \sum_{\bs{\n}_{1:H}\in \Omega_{1:H}} P(\bs{\n}_{1:H}) \Big[\sum_{t=1}^H\sum_{i,j} n_{t}(i,j)\frac{\partial  \log \pi\big(j|i, o(i, \bs{\n}_{t})\big)}{\partial \theta} f_{w}(i, j, o(i, \bs{\n}_{t}))\Big]
		\end{align}
		The above equation proves the theorem.
		%\begin{align}
		%	&\sum_t\mathbb{E}_{\bs{s}_t, \bs{a}_t}\Big[\sum_{i,j} n_{t}(i,j)\frac{\partial  \log \pi\big(j|i, o(i, \bs{\n}_{t})\big)}{\partial \theta} f_{w}(i, j, o(i, \bs{n}_{\bs{s}_t}))\Big]\nonumber\\
		%	=&\sum_t\sum_{\bs{s}_t, \bs{a}_t}P(\bs{s}_t, \bs{a}_t)\sum_{n_{\bs{s}_t\bs{a}_t}}\mathbb{I}[n_{t}(i,j)\sim \bs{s}_t, \bs{a}_t]\Big[\sum_{i,j} n_{t}\frac{\partial  \log \pi\big(j|i, o(i, \bs{\n}_{t})\big)}{\partial \theta} f_{w}(i, j, o(i, \bs{n}_{\bs{s}_t}))\Big]\nonumber\\
		%	=&\sum_t\sum_{n_{\bs{s}_t\bs{a}_t}}P(n_{\bs{s}_t\bs{a}_t})\sum_{\bs{s}_t, \bs{a}_t}\mathbb{I}[n_{t}(i,j)\sim \bs{s}_t, \bs{a}_t]\Big[\sum_{i,j} n_{t}\frac{\partial  \log \pi\big(j|i, o(i, \bs{\n}_{t})\big)}{\partial \theta} f_{w}(i, j, o(i, \bs{n}_{\bs{s}_t}))\Big]\nonumber\\
		%	=& \sum_t\mathbb{E}_{\bn_{1:H}}\Big[\sum_{i,j} n_{t}(i,j)\frac{\partial  \log \pi\big(j|i, o(i, \bs{\n}_{t})\big)}{\partial \theta} f^{m}_{w}(i, j, o(\bn_t,i))\Big]\label{eq:DecomposedCountMACompatiblePolicyGrad}
		% 	\end{align}  
	\end{proof}
	
	\section{Training with individual value function}
	Recall from \cite{NguyenKL17} that in fictitious EM, for each sample $ \bn^{\xi} $ of the count, the individual value function function is computed as
	\begin{align}
	&V^{\xi}_H(i, j) \= r_H(i, j, n^{\xi}_{H}(i)) \label{eq:dp1}\\
	&V^{\xi}_t(i, j) \= r_t(i, j, n^{\xi}_{t}(i)) \!+\!\hspace{-10pt} \sum_{i'\in S, j'\in A}\hspace{-10pt}  \phi^{\bs{n}^\xi}_t(i'|i,j)  \pi_{t+1}^{\bs{n}^\xi}(j'|i') V^{\xi}_{t+1}(i', j') 
	\end{align}
	with
	\begin{align}
	\phi^{\bs{n}^\xi}_t(i' | i,  j) &= \frac{n^\xi_{t}(i, j, i')}{n^\xi_{t}(i, j)}; \;\pi_t^{\bs{n}^\xi}(j|i) = \frac{n^\xi_{t}(i, j)}{n^\xi_{t}(i)} \\
	P_1^{\bs{n}^\xi}( i) &= \frac{n^\xi_{1}(i)}{M}; \; r_t^{\bs{n}^\xi}( i,  j) \= r_t(i, j, n^\xi_{t}(i))
	\end{align}
	We denote the total accumulated reward from time $ t $ to $ H $ of count samples $ \bn^{\xi} $ to be
	\begin{equation}
	R^{\xi}_t = \sum_{t'=t}^H\sum_{i,j} n^{\xi}_{t'}(i,j)r_{t'}(i,j, n^{\xi}_{t'}(i))
	\end{equation}
	
	\begin{lemma}
		\label{lem:local}
		The empirical return $R_t^\xi$ for the time step $t$ given the count sample  $\bs{\n}^\xi_{1:H}$ can be re-parameterized as:
		
		\begin{equation}
		\label{eq:decomposedR}
		R^{\xi}_t  = \sum_{i\in S, j\in A} n_t^\xi(i, j) V_t^\xi(i, j)
		\end{equation}
	\end{lemma}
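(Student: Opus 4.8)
The plan is to prove the identity by backward induction on the time step $t$, running from $t=H$ down to $t=1$, using the dynamic-programming recursion that defines the individual value function $V_t^\xi(i,j)$ together with the flow-consistency properties of the count sample $\bs{\n}^\xi_{1:H}$.

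For the base case $t=H$, the claim is immediate. By definition $V^\xi_H(i,j)=r_H(i,j,n^\xi_H(i))$, hence $\sum_{i,j} n^\xi_H(i,j)\,V^\xi_H(i,j)=\sum_{i,j} n^\xi_H(i,j)\,r_H(i,j,n^\xi_H(i))=R^\xi_H$, which is exactly the definition of the empirical return at the terminal step.

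For the inductive step I would assume $R^\xi_{t+1}=\sum_{i',j'} n^\xi_{t+1}(i',j')\,V^\xi_{t+1}(i',j')$ and start from the additive decomposition $R^\xi_t=\sum_{i,j} n^\xi_t(i,j)\,r_t(i,j,n^\xi_t(i))+R^\xi_{t+1}$ that follows from the definition of $R^\xi_t$. Substituting the DP recursion for $V^\xi_t$ into $\sum_{i,j} n^\xi_t(i,j)\,V^\xi_t(i,j)$, the immediate-reward term matches the first summand verbatim, so the crux is to show that the look-ahead term equals $R^\xi_{t+1}$. For this, the key identity to establish is the count ``push-forward'' relation
\[
\sum_{i,j} n^\xi_t(i,j)\,\phi^{\bs{n}^\xi}_t(i'|i,j)\,\pi^{\bs{n}^\xi}_{t+1}(j'|i')=n^\xi_{t+1}(i',j'),
\]
which one obtains by plugging in $\phi^{\bs{n}^\xi}_t(i'|i,j)=n^\xi_t(i,j,i')/n^\xi_t(i,j)$ and $\pi^{\bs{n}^\xi}_{t+1}(j'|i')=n^\xi_{t+1}(i',j')/n^\xi_{t+1}(i')$, cancelling $n^\xi_t(i,j)$, and then using the flow-conservation property $\sum_{i,j} n^\xi_t(i,j,i')=n^\xi_{t+1}(i')$ of count tables generated from the DBN. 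Multiplying by $V^\xi_{t+1}(i',j')$, summing over $i',j'$, and invoking the inductive hypothesis recovers $R^\xi_{t+1}$, which closes the induction.

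The main obstacle is bookkeeping rather than conceptual: the empirical frequencies $\phi^{\bs{n}^\xi}_t$ and $\pi^{\bs{n}^\xi}_{t+1}$ are $0/0$ whenever $n^\xi_t(i,j)=0$ or $n^\xi_{t+1}(i')=0$, so one must adopt the convention that such terms are $0$ and check that every term carrying a vanishing count genuinely drops out of the sums above (it does, since it is always pre-multiplied by that same zero count). The other point worth stating explicitly is the flow-conservation identity $\sum_{i,j} n^\xi_t(i,j,i')=n^\xi_{t+1}(i')$, which is an immediate consequence of how the count tables are accumulated from consistent agent trajectories: every agent occupying state $i'$ at time $t+1$ contributes exactly one unit to some transition count $n^\xi_t(i,j,i')$.
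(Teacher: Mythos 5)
Your proof is correct, but it takes a genuinely different route from the paper's. The paper's argument is probabilistic and very short: it invokes the interpretation of $V_t^\xi(i,j)$ as the conditional expectation $\mathbb{E}\big[\sum_{t'=t}^H r_{t'}^m \,\big|\, s_t^m=i, a_t^m=j, \bs{\n}_{1:H}^\xi\big]$, asserts that the total empirical return is the sum of these individual value functions over all agents, $R_t^\xi=\sum_m V_t^\xi(s_t^m,a_t^m)$, and then groups identical agents by their state--action pair to obtain $\sum_{i,j} n_t^\xi(i,j)V_t^\xi(i,j)$. You instead argue by backward induction directly on the dynamic-programming recursion defining $V_t^\xi$, with the count push-forward identity $\sum_{i,j} n_t^\xi(i,j)\,\phi_t^{\bs{\n}^\xi}(i'|i,j)\,\pi_{t+1}^{\bs{\n}^\xi}(j'|i')=n_{t+1}^\xi(i',j')$ as the workhorse; that identity checks out once the empirical frequencies are substituted and flow conservation is applied. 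Your version is purely combinatorial and arguably more self-contained: the paper's step ``$R_t^\xi=\sum_m V_t^\xi(s_t^m,a_t^m)$ by definition'' equates a deterministic functional of the counts with a sum of conditional expectations, and verifying that these coincide is essentially the telescoping computation your induction carries out explicitly. The price is the bookkeeping you correctly flag: the $0/0$ convention for empty state--action cells (harmless, since every such term is pre-multiplied by the vanishing count), and the cross-time flow conservation $\sum_{i,j}n_t^\xi(i,j,i')=n_{t+1}^\xi(i')$, which is not among the constraints the paper lists for $\Omega_{1:H}$ but does hold for any count table realized by actual agent trajectories, as you note. Both proofs are valid; yours trades the paper's brevity for an explicit verification of the step the paper leaves implicit.
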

	
	\begin{proof}
		We know from~\cite{NguyenKL17} that the individual value function $V_t^{\xi}$ for a count sample $\bs{\n}^\xi$ is given by the following expectation:
		\begin{align}
		V_t^{\xi}(i, j) = \mathbb{E}\Big[\sum_{t'=t}^H r_{t'}^m | s_t^m=i, a_t^m=j, \bs{\n}_{1:H}^\xi\Big]
		\end{align}
		By definition, the total empirical return $R^{\xi}_t$ is given by the summation of individual value function for all the agents $m$:
		\begin{align}
		R^{\xi}_t &= \sum_{m} V_t^{\xi}(s_t^m, a_t^m) \\
		&= \sum_{i\in S, j\in A}  n_t^\xi(i, j) V_t^\xi(i, j)
		\end{align}
		For the last equation, we have used the fact that agents which are in the same state $i$ and and take the same action $j$, they have the same value function (as all the agents are identical).
	\end{proof}
	\section{Experimental setup}
	To optimize the policy and value function network, we use Adam optimizer with the learning rate chosen from $\{10^{-4}, 10^{-3}, 10^{-2}\}$ for the best performance of algorithms. As observation of the count can have different magnitude in grid navigation and taxi domain, we use batch normalization \cite{ioffe2015batch} for all the networks. To address the different magnitude of rewards, i.e. the grid navigation having maximum reward 1 and taxi domain having maximum reward 100, we normalize the empirical return by adaptively rescaling targets method as in \cite{van2016learning}.\\
	For actor-critic update, we consider the batch size to be 100 for grid navigation and 48 for taxi navigation.\\
	For 'o0' and 'o1' cases, we use no hidden layer in the network. For 'oN' case, we use 2 hidden layers with size $18\times 18$ for both policy and value function network. We use relu unit for all hidden layers and softmax unit for output of policy and linear output for value function.	
\end{document}